\DeclareMathOperator*{\argmax}{arg\,max}
\newtheorem{lemma}{Lemma}
\newtheorem{definition}{Definition}
\newtheorem{theorem}{Theorem}
\newtheorem{proposition}{Proposition}
\newtheorem{corollary}{Corollary}
\newtheorem{assumption}{Assumption}
\title{\LARGE \bf
Discrete-Time Mean Field Control with Environment States
}
\author{Kai Cui, Anam Tahir, Mark Sinzger and Heinz Koeppl
\thanks{The authors are with the Department of Electrical Engineering, Technische Universität Darmstadt, 64287 Darmstadt, Germany. Contact: {\tt\small  \{kai.cui, anam.tahir, mark.sinzger, heinz.koeppl\}@bcs.tu-darmstadt.de}}%
}
\begin{document}

\maketitle
\thispagestyle{empty}
\pagestyle{empty}

\begin{abstract}
Multi-agent reinforcement learning methods have shown remarkable potential in solving complex multi-agent problems but mostly lack theoretical guarantees. Recently, mean field control and mean field games have been established as a tractable solution for large-scale multi-agent problems with many agents. In this work, driven by a motivating scheduling problem, we consider a discrete-time mean field control model with common environment states. We rigorously establish approximate optimality as the number of agents grows in the finite agent case and find that a dynamic programming principle holds, resulting in the existence of an optimal stationary policy. As exact solutions are difficult in general due to the resulting continuous action space of the limiting mean field Markov decision process, we apply established deep reinforcement learning methods to solve the associated mean field control problem. The performance of the learned mean field control policy is compared to typical multi-agent reinforcement learning approaches and is found to converge to the mean field performance for sufficiently many agents, verifying the obtained theoretical results and reaching competitive solutions. 
\end{abstract}


\section{INTRODUCTION}
Reinforcement Learning (RL) has proven to be a very successful approach for solving sequential decision-making problems \cite{sutton2018reinforcement}. Today it has numerous applications e.g. in robotics \cite{kober2013reinforcement}, strategic games \cite{brown2019superhuman} or communication networks \cite{luong2019applications}. Many such applications are modelled as special cases of Markov games, which has led to empirical success in the multi-agent RL (MARL) domain.

However, MARL problems quickly become intractable for large numbers of agents and proposed solutions offer few rigorous guarantees \cite{zhang2021multi}. An increasingly popular approach in resolving this curse of dimensionality are mean field approximation. The main idea is to convert a many-agent system with $N$ indistinguishable and interchangeable agents into a problem where one representative agent interacts with e.g. the empirical state distribution -- the mean field -- of the other agents. Since the $N$-agent model is reduced to a single agent and a mean field, this lends the problem tractability with theoretical guarantees for sufficiently large $N$.

The framework of mean field games (MFG) was first introduced in \cite{huang2006large} and \cite{lasry2007mean} for stochastic differential games and has since been extended to discrete-time \cite{gomes2010discrete, saldi2018markov}. It provides a framework for analyzing many-agent competitive problems, for which learning-based solutions have become increasingly popular \cite{mguni2018decentralised, guo2019learning, cui2021approximately}. Mean field theory applied to the cooperative setting is known as mean field control (MFC), where one assumes that many agents cooperate to achieve Pareto optima \cite{andersson2011maximum, bensoussan2013mean}. MFC has various applications e.g. in smart heating \cite{kizilkale2014collective} or portfolio management \cite{djehiche2016risk}.

The dimensions of the MFC problem are independent of the specific number of agents, making it more tractable. However, solving the MFC problem has the challenge of time-inconsistency due to the non-Markovian nature of the problem \cite{andersson2011maximum, djehiche2015stochastic, djete2019mckean}. A recent way of handling this inherent time-inconsistency problem is to use an enlarged state-action space \cite{pham2018bellman, motte2019mean, gu2019dynamic, gu2020q}. We similarly apply this technique by lifting up the state-action space into its probability measure space, since it will enable usage of dynamic programming and established reinforcement learning methods. 

\begin{figure}
    \centering
    \includegraphics[width=0.7\linewidth]{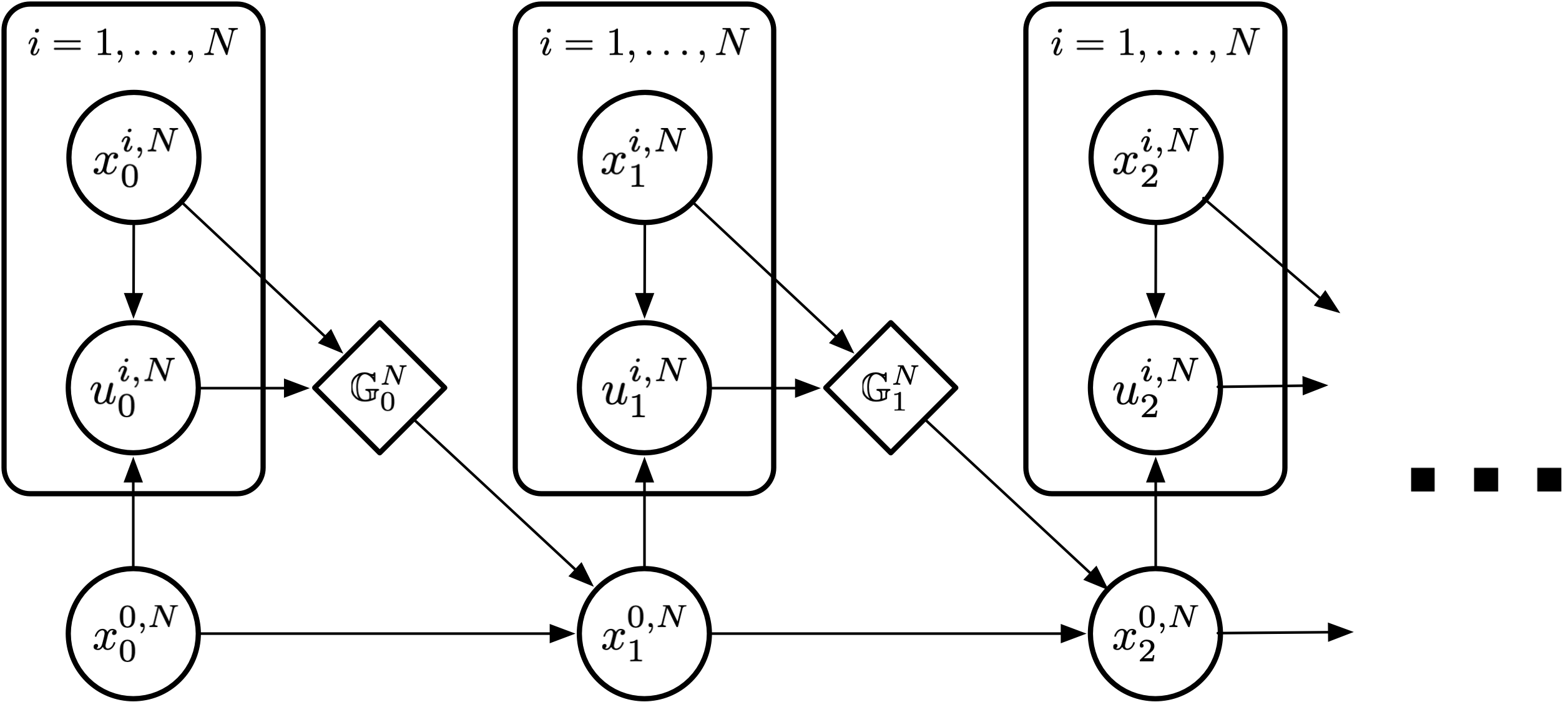}
    \caption{Overview of the multi-agent system as a probabilistic graphical model using plate notation \cite{murphy2012machine}, where circles and diamonds indicate stochastic and deterministic nodes respectively. Each agent $i$ chooses an action $u_t^{i,N}$ conditional on the environment state $x_t^{0,N}$ and local agent state $x_t^{i,N}$, influencing the next environment state $x_{t+1}^{0,N}$ only via their empirical distribution $\mathbb G_t^N$. Agent states are assumed i.i.d. for simplicity of analysis.}
    \label{fig:overview}
\end{figure}

In this work we extend the theory of discrete-time MFC by considering additional environment states. An advantage of discrete-time models is applicability of a plethora of reinforcement learning solutions. Our model can be considered a special case of the MFC equivalent of major-minor mean field games \cite{nourian2013mm, caines2016mm} with trivial major agent policy, which to the best of our knowledge has not been formulated yet. We expect that our results can be generalized, similar to approaches e.g. in \cite{saldi2018markov} for the competitive mean field game, although for deterministic mean fields.

The main contributions of this paper are: (i) We propose a new discrete-time MFC formulation that transforms large-scale multi-agent control problems with common environment states into a simple Markov decision process (MDP) with lifted state-action space; (ii) we rigorously show approximate optimality for sufficiently large systems as well as existence of an optimal stationary policy through a dynamic programming principle, and (iii) associated with this standard discrete-time MDP with continuous action space, we verify our theoretical findings empirically using modern reinforcement learning techniques. As a result, we outperform existing baselines for the many-agent case and obtain a methodology to solve large multi-agent control problems such as the following.

\section{SCHEDULING SCENARIO}
While the concept of mean field limits has been used in queuing systems before, it has mostly been used for the state of the buffer fillings of queues or the number of servers/queues \cite{jsq_mf, khudabukhsh2020generalized}. In this work we use mean fields to represent the state of a large amount of schedulers while modeling the queues exactly. See also Figure~\ref{fig:queue_system} for a visualization of the problem. Note that in principle, our model could be used for any similar resource allocation problem such as allocation of many firefighters to houses on fire.

\begin{figure}
    \centering
    \includegraphics[width=0.7\linewidth]{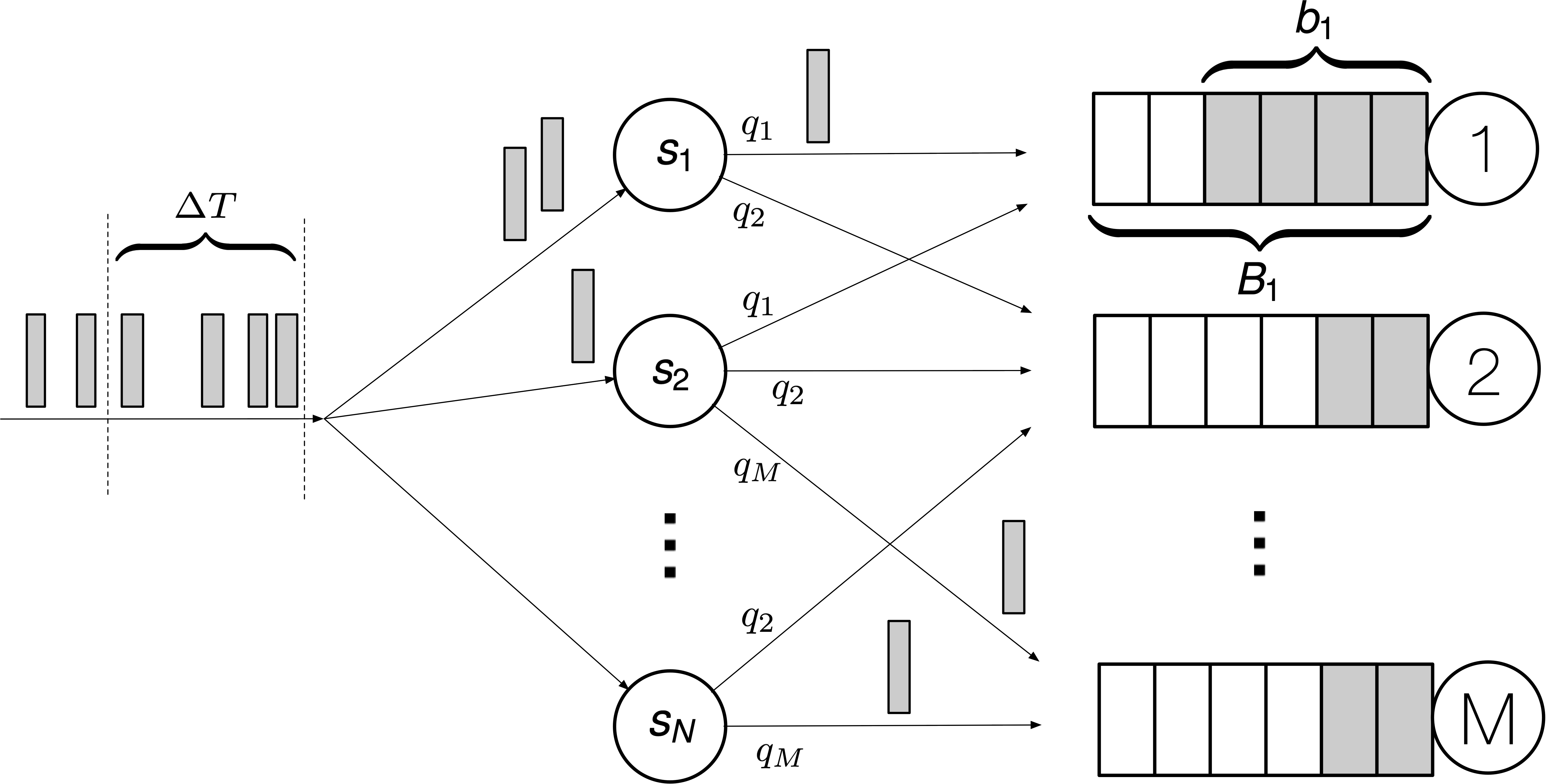}
    \caption{Overview of the queuing system. Many schedulers (middle) obtain packets at a fixed rate (left) that must be assigned to one of the accessible queues (right) such that total packet drops are minimized.}
    \label{fig:queue_system}
\end{figure}

Consider a queuing system with $N$ agents called schedulers, $[s_1,\ldots,s_N]$, and $M$ parallel servers, each with its own finite FIFO queue. Denote the queue filling by $b_i \in \{0,\ldots,B_i\}$, $i = 1,\ldots,M$ where $B_i$ is the maximum buffer space for the $i$-th queue. At any time step $t$, the state $x_t^{i,N} \in \mathcal X$ of a scheduler is the set of queues it has access to. The agent state space $\mathcal X$ therefore consists of all combinations of queue access where every agent has access to at least one of the queues. The environment state is the current buffer filling $x^0 = [b_1, \ldots, b_M]$, where $b_j$ is the buffer filling of queue $j$.

In discrete-time, the number of job arrivals to be assigned at each time step $t$ is Poisson distributed with rate $\lambda \Delta T$ and the number of serviced jobs for each server is Poisson distributed with rate $\beta \Delta T$, where $\Delta T > 0$ can be considered the time span between each synchronization of schedulers. As an approximation, we assume that all queue departures in a time slot happen before the new arrivals, and newly arrived jobs thus cannot be serviced in the same time slot.

We split the total number of job packets which arrive in some time step $\Delta T$ uniformly at random amongst the schedulers. The jobs assigned to each scheduler need to be sent out immediately. Each scheduler decides which of the accessible queues it sends its arrived jobs to during each time step. If a job is mapped to a full buffer, it is lost and a penalty $c_d$ is incurred. The goal of the system is therefore to minimize the number of job drops. At each step of the decision making, we assume that the state of the environment $x^0$ and their own accessible queues are known to the schedulers. 

We can model the dynamics of the environment state dependent on the empirical state-action distribution of all schedulers: Consider agents choosing some choice of queues as their action, where inaccessible queues are treated as randomly picking a destination. In that case, to assign a packet to its destination queue, it is clearly sufficient to consider the empirical distribution: Sampling from the empirical distribution, using the sampled action and, if inaccessible, resampling an accessible queue provides the desired behavior.

\section{MEAN FIELD CONTROL}
In this section, we formulate a $N$-agent model that in the limit of $N \to \infty$ results in a more tractable MFC problem. Importantly, we will then show approximate optimality and a dynamic programming principle for the MFC problem, allowing for application of reinforcement learning.

\textbf{Notation.} \textit{Let $\mathcal A$ be a finite set. We equip $\mathcal A$ with the discrete metric and denote the set of real-valued functions on $\mathcal A$ by $\mathbb R^{\mathcal A}$, For $f \in \mathbb R^{\mathcal A}$ let $\lVert f \rVert_\infty = \max_{a \in \mathcal A} f(a)$. Denote by $|\mathcal A|$ the cardinality of $\mathcal A$. Denote by $\mathcal P(\mathcal A) = \{ p \in \mathbb R^{\mathcal A} \colon p(a) \geq 0, \sum_{a \in \mathcal A} p(a) = 1 \}$ the space of probability simplices, equivalent to the probability measures on $\mathcal A$. Equip $\mathcal P(\mathcal A)$ with the $l_1$-norm $\lVert \mu - \nu \rVert_1 = \sum_{a \in \mathcal A} \left| \mu(a) - \nu(a) \right|$. For readability, we uncurry occurrences of multiple parentheses, e.g. $\pi_t(x^0_t)(x) \equiv \pi_t(x^0_t, x)$. Define $\mu(f) \coloneqq \sum_{a \in \mathcal A} f(a) \mu(a)$ for any $\mu \in \mathcal P(\mathcal A)$, $f \colon \mathcal A \to \mathbb R$.}

\subsection{Finite Agent Model}
Let $\mathcal X$, $\mathcal U$ be a finite state and action space respectively. Let $\mathcal X^0$ be a finite environment state space. For any $N \in \mathbb N$, at each time $t = 0, 1, \ldots$, the states and actions of agent $i = 1, \ldots, N$ are random variables denoted by $x^{i,N}_t \in \mathcal X$ and $u^{i,N}_t \in \mathcal U$. Analogously, the environment state is a random variable denoted by $x^{0,N}_t \in \mathcal X^0$. Define the empirical state-action distribution $\mathbb G_t^N = \frac{1}{N} \sum_{i=1}^N \delta_{(x_t^{i,N}, u^{i,N}_t)} \in \mathcal P(\mathcal X \times \mathcal U)$. For each agent $i$, we consider locally Markovian policies $\pi^i = \{ \pi_t^i \}_{t \geq 0} \in \Pi_N$ from the space of admissible Markov policies $\Pi_N$ where $\pi_t^i \colon \mathcal X^0 \times \mathcal X \to \mathcal P(\mathcal U)$. Further, we define the policy profile $\boldsymbol \pi = (\pi^1, \ldots, \pi^N) \in \Pi_N^N$. 

Acting only on local and environment information may seem like a strong restriction. However, other agent states are uninformative under continuity assumptions as $N \to \infty$ as the interaction between agents will be restricted to the increasingly deterministic empirical state-action distribution.

Let $\mu_0 \in \mathcal P(\mathcal X)$ be the initial agent state distribution, $\mu^0_0 \in \mathcal P(\mathcal X^0)$ the initial environment state distribution and $P^0 \colon \mathcal X^0 \times \mathcal P(\mathcal X \times \mathcal U) \to \mathcal P(\mathcal X^0)$ a transition kernel. The random variables shall follow $x^{0,N}_0 \sim \mu^0_0$ and subsequently
\begin{align}
    x^{i,N}_t &\sim \mu_0, \\
    u^{i,N}_t &\sim \pi_t^i(x^{0,N}_t, x^{i,N}_t), \label{eq:finiteu}\\
    x^{0,N}_{t+1} &\sim P^0(x^{0,N}_t, \mathbb G_t^N),
\end{align}
where for simplicity of further analysis the agent states are always sampled according to $\mu_0$. 

\textbf{Remark.} \textit{While this is a strong dynamics assumption, our formulation is nonetheless sufficient for the scheduling problem. In principle, any results should similarly hold under appropriate assumptions for nontrivial agent state dynamics by considering mean field and environment state together. As this will significantly complicate analysis, an according extension of theoretical results is left to future works.}

Let us introduce another notation. First, define the space of decision rules $\mathcal H \coloneqq \{h \colon \mathcal X \to \mathcal P(\mathcal U)\}$. Then a one-step policy profile $\boldsymbol h = (h^1, \dots, h^N)\in \mathcal H^N$ is an $N$-fold decision rule. Our major example of a one-step policy profile is $(\pi_t^1(x^0), \dots,  \pi_t^N(x^0))$ for fixed $t \geq 0$, fixed $x^0\in \mathcal X^0$ and potentially different policies for the $N$ agents. For given agent state distribution $\mu_0$ and a one-step policy profile $\boldsymbol h \in \mathcal H^N$ let $x^{i,N}\sim \mu_0, u^{i,N} \sim h^i(x^{i,N})$, s.t. $(x^{i,N}, u^{i,N})_{i = 1, \dots, N}$ are independent. Then, consider a random measure $\mathbb G^N_{\boldsymbol h} \in \mathcal P(\mathcal X \times \mathcal U)$ or equivalently its random probability mass function $\mathbb G^N_{\boldsymbol h}\colon \mathcal X \times \mathcal U \to [0,1],$ 
\begin{align}
    \mathbb G^N_{\boldsymbol h}(x,u) \coloneqq \frac 1 N \sum_{i = 1}^N \mathbf{1}_{x,u}(x^{i,N}, u^{i,N}) \, .
\end{align}
Define $\mathcal G^N(\mu_0, \boldsymbol h)$ as the distribution of $\mathbb G^N_{\boldsymbol h}$, so $\mathcal G^N(\mu_0, \boldsymbol h)$ is a distribution over the set $\mathcal P(\mathcal X \times \mathcal U)$ and $\mathbb G^N_{\boldsymbol h} \sim \mathcal G^N(\mu_0, \boldsymbol h)$. Consider the primary example $\boldsymbol h = (\pi_t^1(x^0), \dots,  \pi_t^N(x^0))$. In contrast to the empirical distribution $\mathbb G_t^N$ that depends on a random $x_t^{0,N}$, the random probability mass function $\mathbb G^N_{\boldsymbol h}$ has $x_t^{0,N}= x^0$ fixed. By $\mathbb{E}[\mathbb G^N_{\boldsymbol h}]$ we denote in the following the entry-wise expectation $\{\mathbb{E}[\mathbb G^N_{\boldsymbol h}(x,u)]\}_{(x,u)\in \mathcal X \times \mathcal U}$.

Let $\gamma \in (0,1)$ be the discount factor and $r \colon \mathcal X^0 \times \mathcal P(\mathcal X \times \mathcal U) \to \mathbb R$ a reward function. The goal is to maximise the discounted accumulated reward
\begin{align}
    J^N(\boldsymbol \pi) = \mathbb E \left[ \sum_{t=0}^{\infty} \gamma^t r(x^{0,N}_t, \mathbb G^N_t) \right]
\end{align}
which generalizes optimizing an average per-agent reward 
\begin{align}
    J^N(\boldsymbol \pi) = \sum_{i=1}^{N} \mathbb E \left[ \sum_{t=0}^{\infty} \gamma^t \tilde r(x^{i,N}_t, x^{0,N}_t, \mathbb G^N_t) \right]
\end{align}
for some shared $\tilde r \colon \mathcal X \times \mathcal X^0 \times \mathcal P(\mathcal X \times \mathcal U) \to \mathbb R$ through $r(x^{0,N}_t, \mathbb G^N_t) \equiv \sum_{x \in \mathcal X} \tilde r(x, x^{0,N}_t, \mathbb G^N_t) \sum_{u \in \mathcal U} \mathbb G^N_t(x,u)$. 

As the optimality concept in this work, we therefore define approximate Pareto optimality.

\begin{definition}[Pareto optimality]
For $\epsilon > 0$, $\boldsymbol \pi^\epsilon \in \Pi_N^N$ is $\epsilon$-Pareto optimal if and only if
\begin{align}
    J^N(\boldsymbol \pi^\epsilon) \geq \sup_{\boldsymbol \pi} J^N(\boldsymbol \pi) - \epsilon \, .
\end{align}
\end{definition}

A visualization of this model can be found in Figure~\ref{fig:overview}.

\subsection{Mean Field Model}
As $N \to \infty$, we formally obtain the following mean field MDP, which will be rigorously justified in the sequel. At each time $t = 0, 1, \ldots$, the environment state is a random variable denoted by $x^{0}_t \in \mathcal X^0$. We consider Markovian upper-level policies $\pi = \{ \pi_t \}_{t \geq 0} \in \Pi$ from the space of such policies $\Pi$ where $\pi_t \colon \mathcal X^0 \to \mathcal H$. We equip both $\mathcal H$ and $\Pi$ with the supremum metric. As mentioned, the population state distribution is fixed to $\mu_0 \in \mathcal P(\mathcal X)$ at all times. The random state-action distribution is therefore given by
\begin{align}
    \mathbb G_t \coloneqq \mathbb G(\mu_0, \pi_t(x^{0}_t))
\end{align}
where $\mathbb G \colon \mathcal P(\mathcal X) \times \mathcal H \to \mathcal P(\mathcal X \times \mathcal U)$ is defined by
\begin{align}
    \mathbb G(\mu, h)(x,u) \coloneqq h(x,u) \mu(x)
\end{align}
for any $x \in \mathcal X, u \in \mathcal U$. The random environment state variables therefore follow $x_0^0 \sim \mu^0_0$ and subsequently
\begin{align}
    x^0_{t+1} &\sim P^0(x^0_t, \mathbb G_t) \, .
\end{align}
Analogously, the objective becomes
\begin{align}
    J(\pi) = \mathbb E \left[ \sum_{t=0}^{\infty} \gamma^t r(x^0_t, \mathbb G_t) \right] \, .
\end{align}

We require the following simple continuity assumption to obtain meaningful results in the limit as $N \to \infty$.

\begin{assumption}[Continuity of $r$ and $P^0$] \label{assumption}
The functions $r$ and $P^0$ are continuous, i.e. for all $x^0 \in \mathcal X^0$ and $\mathbb G_n \to \mathbb G \in \mathcal P(\mathcal X \times \mathcal U)$ we have
\begin{align}
    r(x^0, \mathbb G_n) \to r(x^0, \mathbb G) , \quad
    P^0(x^0, \mathbb G_n) \to P^0(x^0, \mathbb G) \, .
\end{align}
\end{assumption}

By compactness of $\mathcal P(\mathcal X \times \mathcal U)$, we have boundedness.

\begin{proposition}
Under Assumption~\ref{assumption}, $r$ is bounded by some $R$, i.e. for any $x^0 \in \mathcal X^0$, $\mathbb G \in \mathcal P(\mathcal X \times \mathcal U)$ we have
\begin{align}
    |r(x^0, \mathbb G)| \leq R \, .
\end{align}
\end{proposition}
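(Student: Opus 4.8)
The plan is to exploit compactness of the domain together with the continuity granted by Assumption~\ref{assumption} and then invoke the extreme value theorem. First I would observe that $\mathcal X$ and $\mathcal U$ are finite by hypothesis, so $\mathcal X \times \mathcal U$ is finite and the simplex $\mathcal P(\mathcal X \times \mathcal U)$ is a closed and bounded subset of the finite-dimensional space $\mathbb R^{\mathcal X \times \mathcal U}$. By the Heine--Borel theorem it is therefore compact.

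Next, for each fixed $x^0 \in \mathcal X^0$, Assumption~\ref{assumption} tells us that $\mathbb G \mapsto r(x^0, \mathbb G)$ is continuous on the compact set $\mathcal P(\mathcal X \times \mathcal U)$. By Weierstrass' extreme value theorem the continuous real-valued map $\mathbb G \mapsto |r(x^0, \mathbb G)|$ attains its maximum, so there exists a finite $R_{x^0} \coloneqq \max_{\mathbb G \in \mathcal P(\mathcal X \times \mathcal U)} |r(x^0, \mathbb G)|$. Finally, since $\mathcal X^0$ is also finite, I would set $R \coloneqq \max_{x^0 \in \mathcal X^0} R_{x^0}$, a maximum over finitely many finite values and hence finite, giving $|r(x^0, \mathbb G)| \leq R$ for all $x^0 \in \mathcal X^0$ and $\mathbb G \in \mathcal P(\mathcal X \times \mathcal U)$ as claimed.

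Equivalently, and perhaps more cleanly, one could equip $\mathcal X^0$ with the discrete metric, note that the product $\mathcal X^0 \times \mathcal P(\mathcal X \times \mathcal U)$ is then compact, observe that $r$ is jointly continuous on it (joint continuity being equivalent to continuity in the second coordinate alone when the first factor is discrete), and apply the extreme value theorem a single time on the product. I do not anticipate a genuine obstacle here: the statement is a direct compactness argument, and the only point requiring care is verifying that all three sets $\mathcal X$, $\mathcal U$, and $\mathcal X^0$ are finite, which simultaneously secures compactness of the simplex and finiteness of the outer maximum over environment states.
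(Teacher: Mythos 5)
Your proof is correct and follows exactly the route the paper intends: the paper dispenses with this proposition via the single remark ``by compactness of $\mathcal P(\mathcal X \times \mathcal U)$, we have boundedness,'' and your argument --- Heine--Borel compactness of the simplex, the extreme value theorem applied to the continuous map $\mathbb G \mapsto |r(x^0, \mathbb G)|$ for each fixed $x^0$, and a final maximum over the finite set $\mathcal X^0$ --- is precisely the elaboration of that remark. Your added observation that sequential continuity in the second argument (which is all Assumption~\ref{assumption} literally provides) suffices on the metric space $\mathcal P(\mathcal X \times \mathcal U)$ is a careful touch, but the approach is the same as the paper's.
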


Our first goal will be to show that as $N \to \infty$, the optimal solution to the MFC is approximately Pareto optimal in the finite $N$ case. This will motivate solving the MFC problem.

\section{APPROXIMATE OPTIMALITY}
We first show the following lemma on uniform convergence in probability of empirical state-action distributions to their state-action-wise average for fixed one-step policy profiles.

\begin{lemma} \label{lem:Ginprob}
Let $x^{0} \in \mathcal X^0$ and $\boldsymbol h \in \mathcal H^N$ be an arbitrary one-step policy profile. Let $\mathbb G^N \sim \mathcal G(\mu_0,\boldsymbol h)$. Then
\begin{enumerate}
    \item[(i)] $\mathbb{E}\left[\lVert \mathbb G^N - \mathbb{E}[\mathbb G^N]\rVert_1^2\right]\le \frac{|\mathcal X|^2 |\mathcal U|^2}{4N}$
    \item[(ii)] $\mathbb{P}\left(\lVert \mathbb G^N - \mathbb{E}[\mathbb G^N]\rVert_1 \geq \epsilon \right) \le \frac{|\mathcal X|^2 |\mathcal U|^2}{4\epsilon^2 N}$
\end{enumerate}
\end{lemma}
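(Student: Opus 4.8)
The plan is to establish (i) directly through a coordinate-wise variance computation, and then derive (ii) from (i) as an immediate consequence of Markov's inequality applied to the squared norm.

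For (i), I would first pass from the $l_1$ to the $l_2$ norm. Writing $d \coloneqq |\mathcal X| \, |\mathcal U|$ for the dimension of $\mathbb R^{\mathcal X \times \mathcal U}$, the Cauchy--Schwarz inequality gives $\lVert a \rVert_1^2 \le d \, \lVert a \rVert_2^2$ for every $a \in \mathbb R^{\mathcal X \times \mathcal U}$. Applying this to $a = \mathbb G^N - \mathbb E[\mathbb G^N]$ and taking expectations yields
\begin{align}
\mathbb E\left[\lVert \mathbb G^N - \mathbb E[\mathbb G^N] \rVert_1^2\right] \le |\mathcal X| \, |\mathcal U| \sum_{(x,u)} \mathrm{Var}\!\left(\mathbb G^N(x,u)\right).
\end{align}

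It then remains to bound each coordinate variance by $1/(4N)$. For fixed $(x,u)$, I would write $\mathbb G^N(x,u) = \frac 1 N \sum_{i=1}^N Z_i$ with Bernoulli indicators $Z_i \coloneqq \mathbf 1_{x,u}(x^{i,N}, u^{i,N})$. Since the pairs $(x^{i,N}, u^{i,N})$ are independent across $i$ by construction, the $Z_i$ are independent, so $\mathrm{Var}(\mathbb G^N(x,u)) = \frac{1}{N^2}\sum_{i=1}^N \mathrm{Var}(Z_i)$. Each $Z_i$ is Bernoulli, hence $\mathrm{Var}(Z_i) \le 1/4$, giving $\mathrm{Var}(\mathbb G^N(x,u)) \le 1/(4N)$. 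Summing over the $|\mathcal X| \, |\mathcal U|$ coordinates produces the claimed bound $|\mathcal X|^2 |\mathcal U|^2 / (4N)$.

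For (ii), I would simply observe that the event $\{\lVert \mathbb G^N - \mathbb E[\mathbb G^N]\rVert_1 \ge \epsilon\}$ coincides with $\{\lVert \mathbb G^N - \mathbb E[\mathbb G^N]\rVert_1^2 \ge \epsilon^2\}$, so Markov's inequality combined with (i) yields the stated tail bound at once.

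The argument is largely routine, and the only step that needs genuine care is the variance computation: because the one-step policy profile $\boldsymbol h$ may assign different decision rules $h^i$ to different agents, the indicators $Z_i$ are generally \emph{not} identically distributed, so one must rely on independence --- rather than an i.i.d. structure --- to discard the cross terms. Matching the exact constant then rests on combining the sharp Bernoulli bound $p(1-p)\le 1/4$ with the dimensional factor $|\mathcal X| \, |\mathcal U|$ arising from the $l_1$--$l_2$ comparison.
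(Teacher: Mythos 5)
Your proof is correct, and its core — writing each coordinate $\mathbb G^N(x,u)$ as an average of independent (not i.i.d.) Bernoulli indicators, bounding each variance by $\frac{1}{4N}$, and deducing (ii) from (i) by Markov applied to the squared norm (which is exactly the paper's invocation of Chebyshev) — matches the paper's argument step for step. The one place you diverge is the reduction from the $l_1$ deviation to coordinate variances: you use the norm comparison $\lVert a \rVert_1^2 \le d\, \lVert a \rVert_2^2$ with $d = |\mathcal X|\,|\mathcal U|$ (Cauchy--Schwarz) and then take expectations, whereas the paper applies Minkowski's inequality, i.e.\ the sub-additivity of $\mathbb E[(\cdot)^2]^{1/2}$, to get
\begin{align*}
\mathbb E\bigl[\lVert \mathbb G^N - \mathbb E[\mathbb G^N]\rVert_1^2\bigr]^{1/2} \le \sum_{x,u} \mathbb V\bigl[\mathbb G^N(x,u)\bigr]^{1/2} .
\end{align*}
In general the paper's intermediate bound $\bigl(\sum_{x,u}\sigma_{x,u}\bigr)^2$ is sharper than your $d \sum_{x,u}\sigma_{x,u}^2$ (Cauchy--Schwarz relates the two), but since both routes then replace every coordinate variance by the uniform bound $\frac{1}{4N}$ — at which point the Cauchy--Schwarz step holds with equality — the two arguments collapse to the identical constant $\frac{|\mathcal X|^2|\mathcal U|^2}{4N}$. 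Your closing remark is also well taken and consistent with the paper: because the decision rules $h^i$ may differ across agents, only independence of the pairs $(x^{i,N},u^{i,N})$, not identical distribution, is available, and both proofs use precisely that to kill the cross terms in the variance of the empirical average.
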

\begin{proof}
By Chebyshev's inequality, (i) implies (ii). It remains to prove (i). Let $x^{i,N}\sim \mu_0$ be i.i.d. and $u^{i,N}\sim \pi^i(x^{0}, x^{i,N})$, s.t. $(x^{i,N}, u^{i,N})_{i = 1, \dots, N}$ are independent. Then by the sub-additivity of $\mathbb E[(\cdot)^2]^\frac 1 2$, we have
\begin{align*}
    &\mathbb{E}\left[\lVert \mathbb G^N - \mathbb{E}[\mathbb G^N]\rVert_1^2\right]^\frac 1 2 \\
    &\quad = \mathbb{E}\left[\left(\sum_{x \in \mathcal X, u \in \mathcal U} \left\vert\frac 1 N \sum_{i = 1}^N\mathbf{1}_{x,u}(x^{i,N}, u^{i,N}) \right. \right. \right. \\
    &\qquad \qquad \left. \left. \left. -  \mathbb{E}\left[ \frac 1 N \sum_{i = 1}^N \mathbf{1}_{x,u}(x^{i,N}, u^{i,N})\right]\right\vert\right)^2\right]^\frac 1 2 \\
    &\quad \leq \sum_{x \in \mathcal X, u \in \mathcal U} \left(\mathbb{V}\left[\frac 1 N \sum_{i = 1}^N\mathbf{1}_{x,u}(x^{i,N}, u^{i,N})\right]\right)^{\frac 1 2} \\
    &\quad = \sum_{x \in \mathcal X, u \in \mathcal U} \left( \frac{1}{N^2} \sum_{i = 1}^N \mathbb{V}\left[\mathbf{1}_{x,u}(x^{i,N}, u^{i,N})\right]\right)^{\frac 1 2} \\
    &\quad \leq \sum_{x \in \mathcal X, u \in \mathcal U} \left( \frac{1}{N^2} \sum_{i = 1}^N \frac 1 4\right)^{\frac 1 2} = \frac{|\mathcal X| |\mathcal U|}{2 \sqrt N}
\end{align*}
using the trivial variance bound $\frac 1 4$ for indicator functions. 
\end{proof}

To achieve approximate optimality of mean field solutions in the $N$-agent case, we first define how to obtain an $N$-agent policy $\boldsymbol \pi^N \in \Pi_N^N$ from a mean field policy $\hat \pi \in \Pi$ by
\begin{align*}
    \boldsymbol \pi^N(\hat \pi) = (\pi^1, \ldots, \pi^N) \text{ with $\pi^i_t(x^0, x) = \hat \pi_t(x^0)(x)$}
\end{align*}
for all $i=1,\ldots,N$, i.e. all agents with state $x \in \mathcal X$ will follow the action distribution $\hat \pi_t(x^0_t)(x)$ at times $t \geq 0$.

\begin{theorem} \label{thm:uniformV}
Under Assumption~\ref{assumption}, we have uniform convergence of the $N$-agent objective to the mean field objective as $N \to \infty$, i.e.
\begin{align}
    \lim_{N \to \infty} \sup_{\pi \in \Pi} \left| J^N(\boldsymbol \pi^N(\pi)) - J(\pi) \right| = 0 \, .
\end{align}
\end{theorem}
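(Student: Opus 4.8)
The plan is to reduce the infinite-horizon comparison to a finite-horizon one and then propagate, by induction over time, the closeness of the law of the environment state together with the concentration of the empirical distribution supplied by Lemma~\ref{lem:Ginprob}. Since $|r|\le R$ and $\gamma\in(0,1)$, for any $\varepsilon>0$ I would first pick a horizon $T$ with $2R\gamma^{T+1}/(1-\gamma)<\varepsilon/2$, bounding the tail $t>T$ uniformly in $\pi$ after writing $|J^N(\boldsymbol\pi^N(\pi))-J(\pi)|\le\sum_{t\ge 0}\gamma^t|\mathbb E[r(x^{0,N}_t,\mathbb G^N_t)]-\mathbb E[r(x^0_t,\mathbb G_t)]|$. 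It then remains to control the finitely many terms $t\le T$.

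The central observation is that for the synchronized profile $\boldsymbol h=(\pi_t(x^0),\dots,\pi_t(x^0))$ one has $\mathbb E[\mathbb G^N_{\boldsymbol h}]=\mathbb G(\mu_0,\pi_t(x^0))=\mathbb G_t$ exactly, so Lemma~\ref{lem:Ginprob} says the random empirical distribution concentrates around precisely the mean field value. Both $x^{0,N}_t$ and $x^0_t$ are then time-inhomogeneous Markov chains on the finite set $\mathcal X^0$, with one-step kernels $\bar P^{0,N}_{\pi,t}(x^0,\cdot)=\mathbb E[P^0(x^0,\mathbb G^N_{\boldsymbol h})]$ and $\bar P^0_{\pi,t}(x^0,\cdot)=P^0(x^0,\mathbb G_t)$. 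Using that $P^0$ is continuous on the compact set $\mathcal P(\mathcal X\times\mathcal U)$ and that $\mathcal X^0$ is finite, continuity is uniform, so for any $\delta$ there is $\eta$ (independent of $x^0$ and of $\pi$) with $\|\mathbb G-\mathbb G'\|_1<\eta\Rightarrow\|P^0(x^0,\mathbb G)-P^0(x^0,\mathbb G')\|_1<\delta$. Splitting the expectation over $\{\|\mathbb G^N_{\boldsymbol h}-\mathbb G_t\|_1<\eta\}$ and its complement, and bounding the complement via Lemma~\ref{lem:Ginprob}(ii) (the difference of transition kernels being at most $2$), I obtain $\rho_N:=\sup_{\pi,t,x^0}\|\bar P^{0,N}_{\pi,t}(x^0,\cdot)-\bar P^0_{\pi,t}(x^0,\cdot)\|_1\le\delta+|\mathcal X|^2|\mathcal U|^2/(2\eta^2N)$, hence $\rho_N\to 0$. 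An identical argument with $r$ in place of $P^0$ gives $\sigma_N:=\sup_{\pi,t,x^0}|\mathbb E[r(x^0,\mathbb G^N_{\boldsymbol h})]-r(x^0,\mathbb G_t)|\to 0$. Crucially, these moduli are policy-independent because $r$ and $P^0$ are fixed model data.

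Next I would propagate the environment-state law error by induction. Writing $\mu^{0,N}_t,\mu^0_t$ for the laws of $x^{0,N}_t,x^0_t$ and $\Delta^N_t:=\sup_\pi\|\mu^{0,N}_t-\mu^0_t\|_1$, a standard add-and-subtract estimate on $\mu^{0,N}_{t+1}=\mu^{0,N}_t\bar P^{0,N}_{\pi,t}$ versus $\mu^0_{t+1}=\mu^0_t\bar P^0_{\pi,t}$, using that both kernels are stochastic, yields $\Delta^N_{t+1}\le\rho_N+\Delta^N_t$; since $\Delta^N_0=0$ (both equal $\mu^0_0$), this gives $\Delta^N_t\le t\rho_N$. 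Decomposing the per-step reward difference in the same way then gives, uniformly in $\pi$, $|\mathbb E[r(x^{0,N}_t,\mathbb G^N_t)]-\mathbb E[r(x^0_t,\mathbb G_t)]|\le\sigma_N+R\,\Delta^N_t\le\sigma_N+Rt\rho_N$. Summing over $t\le T$ and adding the tail bound yields $\sup_\pi|J^N(\boldsymbol\pi^N(\pi))-J(\pi)|\le\sum_{t=0}^T\gamma^t(\sigma_N+Rt\rho_N)+2R\gamma^{T+1}/(1-\gamma)$; letting $N\to\infty$ (a finite sum, with $\rho_N,\sigma_N\to 0$) and then $T\to\infty$ closes the argument.

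I expect the main obstacle to be securing uniformity over the infinite policy space $\Pi$, rather than convergence for a fixed $\pi$. This is resolved by observing that the only policy-dependent quantities are the induced mean field $\mathbb G_t$ and the state laws, whereas the continuity moduli of $r$ and $P^0$ and the concentration bound of Lemma~\ref{lem:Ginprob} are intrinsic to the fixed model; the linear-in-$t$ accumulation $\Delta^N_t\le t\rho_N$ is then harmless because discounting permits truncation at a finite $T$.
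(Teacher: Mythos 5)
Your proof is correct, and although it rests on the same two pillars as the paper's argument --- the exact mean identity $\mathbb E[\mathbb G^N_{\boldsymbol h}] = \mathbb G(\mu_0, h)$ for synchronized one-step profiles, and Lemma~\ref{lem:Ginprob}'s policy-uniform concentration combined with uniform continuity of $r$ and $P^0$ on the compact simplex $\mathcal P(\mathcal X \times \mathcal U)$ --- your bookkeeping is genuinely different. The paper proceeds qualitatively: it proves, by induction over $t$, uniform weak convergence of the joint laws $\mathcal L(x^{0,N}_t, \mathbb G^N_t) \to \mathcal L(x^{0}_t, \mathbb G_t)$ against \emph{arbitrary} continuous bounded test functions $f$ (via an $\epsilon$--$\delta$ argument per test function), then specializes $f$ to $r$ and to the components of $P^0$ for the induction step, and finally disposes of the discounted series by dominated convergence. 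You instead exploit the i.i.d.\ resampling of agent states from $\mu_0$ at every step, which is exactly what validates your reduction: conditional on $x^{0,N}_t = x^0$, the empirical distribution has law $\mathcal G^N(\mu_0, (\pi_t(x^0), \dots, \pi_t(x^0)))$ independently of the past, so $x^{0,N}_t$ really is a time-inhomogeneous Markov chain with your averaged kernel $\bar P^{0,N}_{\pi,t}$ (the paper uses the same structural fact implicitly when it rewrites $\mathbb E[f(x^{0,N}_t, \mathbb G^N_t) \mid x^{0,N}_t = x^0]$ as $\mathbb E[f(x^0, \mathbb G^N_\pi)]$). You then compare one-step kernels in total variation through the policy- and time-uniform moduli $\rho_N, \sigma_N$, propagate via the $l_1$-contraction of stochastic kernels to get $\Delta^N_t \leq t\rho_N$, and truncate the discounted series explicitly rather than invoking dominated convergence; your constants check out ($2$ bounds the $l_1$-distance of transition rows, $2R$ the reward gap on the bad event, and the linear-in-$t$ accumulation is harmless after truncation). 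What your route buys is an explicit error decomposition $\sum_{t=0}^T \gamma^t (\sigma_N + R t \rho_N) + 2R\gamma^{T+1}/(1-\gamma)$, which would upgrade to a genuine $O(1/\sqrt{N})$ rate if $r$ and $P^0$ were Lipschitz rather than merely continuous; what the paper's route buys is the stronger intermediate statement of uniform weak convergence of the joint law, which it reuses nearly verbatim in the proof of Theorem~\ref{thm:uniformAvg}. Two cosmetic points only: your $\bar P^0_{\pi,t}(x^0, \cdot)$ should read $P^0(x^0, \mathbb G(\mu_0, \pi_t(x^0)))$ rather than $P^0(x^0, \mathbb G_t)$, since $\mathbb G_t$ is the random measure evaluated at $x^0_t$; and the limits must be taken in the order ``fix $\varepsilon$, choose $T$, then send $N \to \infty$,'' which your opening sentence states correctly even though the closing sentence loosely says ``then $T \to \infty$.''
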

\begin{proof}
We have by definition
\begin{align}
    &\sup_{\pi \in \Pi} \left| J^N(\boldsymbol \pi^N(\pi)) - J(\pi) \right| \\ 
    &\quad = \sup_{\pi \in \Pi} \left| \sum^{\infty}_{t=0} \gamma^t \mathbb E \left[ r(x^{0,N}_t, \mathbb G_t^N) - r(x^0_t, \mathbb G_t) \right] \right| \\
    &\quad \leq \sum^{\infty}_{t=0} \gamma^t \sup_{\pi \in \Pi} \left| \mathbb E \left[ r(x^{0,N}_t, \mathbb G_t^N) - r(x^0_t, \mathbb G_t) \right] \right| \, . \label{eq:domconv}
\end{align}

To obtain the desired result, we first show for any $t \geq 0$ that $\sup_{\pi \in \Pi} \lVert \mathcal L(x^{0,N}_t) - \mathcal L(x^{0}_t) \rVert_1 \to 0$ implies $\mathcal L(x^{0,N}_t, \mathbb G_t^N) \to \mathcal L(x^{0}_t, \mathbb G_t)$ weakly uniformly over all $\pi \in \Pi$. Note that $\sup_{\pi \in \Pi} \lVert \mathcal L(x^{0,N}_t) - \mathcal L(x^{0}_t) \rVert_1 \to 0$ by definition implies
\begin{align*}
    \sup_{\pi \in \Pi} \left| \mathcal L(x^{0,N}_t)(x^0) - \mathcal L(x^{0}_t)(x^0) \right| \to 0
\end{align*}
for any $x^0 \in \mathcal X^0$. For the joint law, consider any $f \colon \mathcal X^0 \times \mathcal P(\mathcal X \times \mathcal U) \to \mathbb R$,  continuous and bounded by $|f| \leq F$. Then
\begin{align*}
    &\sup_{\pi \in \Pi} \left| \mathcal L(x^{0,N}_t, \mathbb G_t^N)(f) - \mathcal L(x^{0}_t, \mathbb G_t)(f) \right| \\
    &\quad = \sup_{\pi \in \Pi} \left| \mathbb E \left[ f(x^{0,N}_t, \mathbb G_t^N) \right] - \mathbb E \left[ f(x^{0}_t, \mathbb G_t) \right] \right| \\
    &\quad \leq \sup_{\pi \in \Pi} \sum_{x^0 \in \mathcal X^0} \left| \mathbb E \left[ f(x^{0,N}_t, \mathbb G_t^N) \mid x^{0,N}_t = x^0 \right] \mathcal L(x^{0,N}_t)(x) \right. \\
    &\qquad \qquad \qquad - \left. f(x^0, \mathbb G(\mu_0, \pi_t(x^0)) \, \mathcal L(x^{0}_t)(x^0) \right| \\
    &\quad \leq \sum_{x^0 \in \mathcal X^0} \sup_{\pi \in \Pi} \left| \mathbb E \left[ f(x^{0,N}_t, \mathbb G_t^N) \mid x^{0,N}_t = x^0 \right] \right| \\ 
    &\qquad \qquad \qquad \cdot \sup_{\pi \in \Pi} \left| \mathcal L(x^{0,N}_t)(x^0) - \mathcal L(x^{0}_t)(x^0) \right| \\
    &\qquad + \sum_{x^0 \in \mathcal X^0} \sup_{\pi \in \Pi} \bigg| f(x^0, \mathbb G(\mu_0, \pi_t(x^0)) \\
    &\qquad \qquad \qquad - \mathbb E \left[ f(x^{0,N}_t, \mathbb G_t^N) \mid x^{0,N}_t = x^0 \right] \bigg| \\
    &\qquad \qquad \qquad \cdot \sup_{\pi \in \Pi} \mathcal L(x^{0}_t)(x^0),
\end{align*}
where the first sum goes to zero by assumption and boundedness of $f$. For the second term, consider arbitrary fixed $x^0 \in \mathcal X^0$. Write $\mathbb G_\pi$ short for $\mathbb G(\mu_0, \pi_t(x^0))$ and introduce
$\mathbb G^N_\pi \sim \mathcal G(\mu_0, (\pi_t(x^0), \dots, \pi_t(x^0)))$
for all $N, \pi \in \Pi$. So in contrast to $\mathbb G_t^N$ that depends on a random $x_t^{0,N}$, the random probability mass function $\mathbb G^N_\pi$ has $x_t^{0,N}= x^0$ fixed. Then
\begin{align*}
     &f(x^0, \mathbb G(\mu_0, \pi_t(x^0))) - \mathbb E \left[ f(x^{0,N}_t, \mathbb G_t^N) \mid x^{0,N}_t = x^0 \right] \\
     &= f(x^0, \mathbb G_\pi) - \mathbb E \left[ f(x^0, \mathbb G^N_\pi) \right]
\end{align*}
We observe that for any $(x,u) \in \mathcal X \times \mathcal U$
\begin{equation}
     \mathbb{E}[\mathbb G^N_\pi(x,u)] = \mathbb G_\pi(x,u).
     \label{eq:expectation}
\end{equation}
For this purpose, let $x^{i,N}\sim \mu_0$ be i.i.d. and $u^{i,N}\sim \pi_t(x^{0}, x^{i,N})$, s.t. $(x^{i,N}, u^{i,N})_{i = 1, \dots, N}$ are independent. Then for any $(x,u)\in \mathcal X \times \mathcal U$ we have
\begin{align*}
    \mathbb{E}[\mathbb G^N_\pi(x,u)] &= \frac 1 N \sum_{i = 1}^N \mathbb E \left[ \mathbf{1}_{x,u}(x^{i,N}, u^{i,N})\right]\\
    &= \frac 1 N \sum_{i = 1}^N \mu_0(x)\pi_t(x^0,x,u)\\
    &= \mathbb G_\pi(x,u) \, .
\end{align*}
Let $\epsilon > 0$ arbitrary. By compactness of $\mathcal P(\mathcal X \times \mathcal U)$, the function $f(x^0,\cdot)\colon P(\mathcal X \times \mathcal U) \to \mathbb R$ is uniformly continuous. Consequently, there exists $\delta > 0$ such that for all $\pi\in \Pi$
\begin{align*}
    &\lVert \mathbb G_\pi - \mathbb G^N_\pi \rVert_1 < \delta\\ &\qquad \qquad\implies \left| f(x^0, \mathbb G_\pi) - f(x^0, \mathbb G^N_\pi) \right| < \frac{\epsilon}{2} \, .
\end{align*}
By Lemma~\ref{lem:Ginprob} (ii) and \eqref{eq:expectation} there exists $N' \in \mathbb N$ such that for $N > N'$ and for all $\pi \in \Pi$ we have
\begin{align*}
    \mathbb P \left( \lVert \mathbb G_\pi - \mathbb G^N_\pi \rVert_1 \geq \delta \right) \leq \frac{\epsilon}{4F} \, .
\end{align*}
As a result, we have
\begin{align*}
    & \mathbb E \left[ \left| f(x^0, \mathbb G_\pi) - f(x^0, \mathbb G^N_\pi) \right|\right] \\
    &\quad \leq \mathbb P \left( \left| f(x^0, \mathbb G_\pi) - f(x^0, \mathbb G^N_\pi \right) \right| \geq \frac{\epsilon}{2} ) \cdot 2F + 1 \cdot \frac{\epsilon}{2} \\
    &\quad \leq \mathbb P \left( \lVert \mathbb G_\pi - \mathbb G^N_\pi \rVert_1 \geq \delta \right) \cdot 2F + \frac{\epsilon}{2} \\
    &\quad \leq \frac{\epsilon}{4F} \cdot 2F + \frac{\epsilon}{2} = \epsilon \, .
\end{align*}
Since $\epsilon$ was arbitrary, and no choices depended on $\pi \in \Pi$, we have the desired convergence of the second term
\begin{align*}
    &\lim_{N \to \infty}\sup_{\pi \in \Pi} \bigg| f(x, \mathbb G(\mu_0, \pi_t(x)) \\
    &\qquad \qquad \qquad - \mathbb E \left[ f(x^{0,N}_t, \mathbb G_t^N) \mid x^{0,N}_t = x \right] \bigg| = 0 \, .
\end{align*}

We can now show $\mathcal L(x^{0,N}_t, \mathbb G_t^N) \to \mathcal L(x^{0}_t, \mathbb G_t)$ weakly uniformly over all $\pi \in \Pi$ by induction over all $t$, which by Assumption~\ref{assumption} will imply 
\begin{align}
    \sup_{\pi \in \Pi} \left| \mathbb E \left[ r(x^{0,N}_t, \mathbb G_t^N) - r(x^0_t, \mathbb G_t) \right] \right| \to 0
\end{align}
for all $t \geq 0$ and hence the desired statement by the dominated convergence theorem applied to \eqref{eq:domconv}.

At $t=0$, we trivially have $\mathcal L(x^{0,N}_t) = \mu_0^0 = \mathcal L(x^{0}_t)$ and therefore $\mathcal L(x^{0,N}_0, \mathbb G_0^N) \to \mathcal L(x^{0}_0, \mathbb G_0)$ uniformly by the prequel. Assume that the induction assumption holds at time $t$, then at time $t+1$ we have
\begin{align*}
    &\lVert \mathcal L(x_{t+1}^{0,N}) - \mathcal L(x_{t+1}^{0}) \rVert_1 \\
    &\quad = \sum_{x^0 \in \mathcal X^0} \left| \mathcal L(x_{t+1}^{0,N})(x^0) - \mathcal L(x_{t+1}^{0})(x^0) \right| \\
    &\quad = \sum_{x^0 \in \mathcal X^0} \left| \mathbb E \left[ P^0(x^0 \mid x^{0,N}_t, \mathbb G_t^N) \right] - \mathbb E \left[ P^0(x^0 \mid x^{0}_t, \mathbb G_t) \right] \right| \\
    &\quad \to 0
\end{align*}
uniformly by Assumption~\ref{assumption} and induction assumption.
\end{proof}

To extend to optimality over arbitrary asymmetric policy tuples, we show that the performance of policy tuples is close to the averaged policy as $N \to \infty$.

\begin{theorem} \label{thm:uniformAvg}
Under Assumption~\ref{assumption}, as $N \to \infty$ we have similar performance of any policy tuple $\boldsymbol \pi = (\pi^1, \ldots, \pi^N) \in \Pi_N^N$ and its average policy $\hat \pi(\boldsymbol \pi) \in \Pi$ defined by $\hat \pi_t(x^0)(a \mid x) = \frac{1}{N} \sum_{i=1}^N \pi^i_t(a \mid x^0, x)$ in the $N$-agent case, i.e. with shorthand $\hat \pi = \hat \pi(\boldsymbol \pi)$ we have
\begin{align}
    \lim_{N \to \infty} \sup_{\boldsymbol \pi \in \Pi_N^N} \left| J^N(\pi^1, \ldots, \pi^N) - J^N(\boldsymbol \pi^N(\hat \pi)) \right| = 0 \, .
\end{align}
\end{theorem}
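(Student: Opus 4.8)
The plan is to compare both finite-agent objectives against the common mean field objective $J(\hat \pi)$ and then invoke the triangle inequality
\begin{align*}
    \left| J^N(\boldsymbol \pi) - J^N(\boldsymbol \pi^N(\hat \pi)) \right| \leq \left| J^N(\boldsymbol \pi) - J(\hat \pi) \right| + \left| J(\hat \pi) - J^N(\boldsymbol \pi^N(\hat \pi)) \right| \, .
\end{align*}
The key observation driving the whole argument is that, conditional on the environment state $x_t^{0,N} = x^0$, the empirical state-action distribution $\mathbb G_t^N$ has the same expectation under the asymmetric tuple $\boldsymbol \pi$ as under its symmetric average $\boldsymbol \pi^N(\hat \pi)$. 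Indeed, repeating the computation behind \eqref{eq:expectation} with agent-dependent policies gives
\begin{align*}
    \mathbb E \left[ \mathbb G_t^N(x,u) \mid x_t^{0,N} = x^0 \right] = \mu_0(x) \cdot \frac 1 N \sum_{i=1}^N \pi_t^i(x^0, x, u) = \mathbb G(\mu_0, \hat \pi_t(x^0))(x,u) \, ,
\end{align*}
which is precisely the mean field value under the averaged policy $\hat \pi$.

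For the second term, note that $\hat \pi = \hat \pi(\boldsymbol \pi) \in \Pi$ is an admissible mean field policy for every tuple. Hence Theorem~\ref{thm:uniformV}, whose conclusion is uniform over all of $\Pi$, immediately yields $\sup_{\boldsymbol \pi \in \Pi_N^N} | J(\hat \pi) - J^N(\boldsymbol \pi^N(\hat \pi)) | \to 0$, since the supremum over tuples only explores policies $\hat \pi$ lying in $\Pi$.

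For the first term I would re-run the induction of Theorem~\ref{thm:uniformV} with $\hat \pi$ substituted for the mean field policy $\pi$. The two ingredients that proof relied on both survive the passage to asymmetric profiles: Lemma~\ref{lem:Ginprob} already permits an arbitrary one-step policy profile $\boldsymbol h \in \mathcal H^N$ and bounds the concentration rate of $\mathbb G^N$ around $\mathbb E[\mathbb G^N]$ uniformly, while the displayed identity shows $\mathbb E[\mathbb G_t^N \mid x_t^{0,N} = x^0] = \mathbb G(\mu_0, \hat \pi_t(x^0))$. Consequently, conditional on $x_t^{0,N} = x^0$, the distribution $\mathbb G_t^N$ generated by $\boldsymbol \pi$ concentrates around exactly the mean field value $\mathbb G(\mu_0, \hat \pi_t(x^0))$, so the induction over $t$ establishing $\mathcal L(x_t^{0,N}, \mathbb G_t^N) \to \mathcal L(x_t^0, \mathbb G_t)$ weakly and uniformly carries over unchanged, the limiting mean field trajectory now being the one induced by $\hat \pi$. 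Together with Assumption~\ref{assumption} and dominated convergence applied as in \eqref{eq:domconv}, this delivers $\sup_{\boldsymbol \pi \in \Pi_N^N} | J^N(\boldsymbol \pi) - J(\hat \pi(\boldsymbol \pi)) | \to 0$.

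The main obstacle is securing uniformity over the asymmetric policy space $\Pi_N^N$, whose dimension grows with $N$ and over which the limiting object $\hat \pi(\boldsymbol \pi)$ itself varies with both $N$ and $\boldsymbol \pi$. This is resolved by noting that every bound used — the concentration rate of Lemma~\ref{lem:Ginprob}, the uniform continuity of $r$ and $P^0$ supplied by Assumption~\ref{assumption} on the compact space $\mathcal P(\mathcal X \times \mathcal U)$, and the geometric discounting enabling dominated convergence — depends on $\boldsymbol \pi$ only through the induced conditional mean $\mathbb G(\mu_0, \hat \pi_t(x^0))$, and is therefore uniform. Since each $\hat \pi(\boldsymbol \pi)$ lies in $\Pi$, the uniform-over-$\Pi$ guarantee of Theorem~\ref{thm:uniformV} transfers to a uniform-over-$\Pi_N^N$ guarantee, and the triangle inequality closes the argument.
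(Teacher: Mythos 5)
Your proof is correct, but it takes a genuinely different decomposition from the paper's. The paper never routes through the mean field objective $J(\hat \pi)$: it compares the two finite-$N$ systems directly, running the induction over $t$ between the law of $(x_t^{0,N}, \mathbb G_t^N)$ under the asymmetric tuple $\boldsymbol \pi$ and the law of $(\hat x_t^{0,N}, \hat{\mathbb G}_t^N)$ under the averaged tuple $\boldsymbol \pi^N(\hat \pi)$; its key computation is your same conditional-mean identity in the form $\mathbb E[\mathbb G^N_{\hat \pi}(x,u)] = \mathbb E[\mathbb G^N_{\boldsymbol \pi}(x,u)]$, after which Lemma~\ref{lem:Ginprob}(i) applied to both random measures plus Chebyshev yields $\mathbb P(\lVert \mathbb G^N_{\hat \pi} - \mathbb G^N_{\boldsymbol \pi} \rVert_1 \geq \epsilon) \leq |\mathcal X|^2 |\mathcal U|^2 / (\epsilon^2 N)$, i.e.\ concentration of one empirical measure around the other rather than around the deterministic limit $\mathbb G(\mu_0, \hat \pi_t(x^0))$. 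You instead prove the strictly stronger intermediate statement $\sup_{\boldsymbol \pi \in \Pi_N^N} |J^N(\boldsymbol \pi) - J(\hat \pi(\boldsymbol \pi))| \to 0$, of which Theorem~\ref{thm:uniformV} is the symmetric special case, and your justifications are exactly the right ones: Lemma~\ref{lem:Ginprob} is stated for arbitrary asymmetric one-step profiles with a rate independent of the profile, the conditional mean of $\mathbb G_t^N$ given $x_t^{0,N} = x^0$ equals $\mathbb G(\mu_0, \hat \pi_t(x^0))$ because agent states are freshly drawn i.i.d.\ from $\mu_0$ each step, and every $\delta$ and $N'$ chosen in Theorem~\ref{thm:uniformV}'s induction depends only on $f$, $|\mathcal X|$, $|\mathcal U|$, so uniformity survives both the growing index set $\Pi_N^N$ and the moving target trajectory induced by $\hat \pi(\boldsymbol \pi)$ — a point you correctly single out and resolve. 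Your handling of the second leg, bounding $\sup_{\boldsymbol \pi}$ by $\sup_{\pi \in \Pi}$ since $\hat \pi(\boldsymbol \pi) \in \Pi$, is also valid. As for what each approach buys: yours is more economical globally, since the generalized induction subsumes Theorems~\ref{thm:uniformV} and \ref{thm:uniformAvg} in one statement and the paper's corollary already telescopes through $J(\hat \pi)$ anyway; the paper's direct $N$-versus-$N$ comparison keeps Theorem~\ref{thm:uniformAvg} self-contained, avoids re-deriving the mean field convergence for asymmetric profiles, and produces the clean two-sided concentration bound between the two empirical measures as a quantitative byproduct.
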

\begin{proof}
Let $\boldsymbol \pi \in \Pi_N^N$ arbitrary. Again, we have by definition
\begin{align}
    &\sup_{\boldsymbol \pi \in \Pi_N^N} \left| J^N(\pi^1, \ldots, \pi^N) - J^N(\boldsymbol \pi^N(\hat \pi)) \right| \\ 
    &\quad \leq \sum^{\infty}_{t=0} \gamma^t \sup_{\boldsymbol \pi \in \Pi_N^N} \left| \mathbb E \left[ r(x^{0,N}_t, \mathbb G_t^N) - r(\hat x^{0,N}_t, \hat {\mathbb G}_t^N) \right] \right| \label{eq:thm3terms}
\end{align}
by introducing random variables $\hat x^{0,N}_t$, $\hat {\mathbb G}_t^N$, $\hat x^{i,N}_t$, $\hat u^{i,N}_t$, $i = 1, \ldots, N$ induced by instead applying the averaged policy tuple $\boldsymbol \pi^N(\hat \pi)$ in \eqref{eq:finiteu}. By dominated convergence, it is sufficient to show term-wise convergence to zero in \eqref{eq:thm3terms}.

Fix $t \geq 0$. As in the proof of Theorem~\ref{thm:uniformV}, we show that $\sup_{\pi \in \Pi} \lVert \mathcal L(x^{0,N}_t) - \mathcal L(\hat x^{0,N}_t) \rVert_1 \to 0$ implies $\sup_{\boldsymbol \pi \in \Pi_N^N} \left| \mathcal L(x^{0,N}_t, \mathbb G_t^N)(f) - \mathcal L(\hat x^{0,N}_t, \hat{\mathbb G}_t^N)(f) \right| \to 0$ for any $f \colon \mathcal X^0 \times \mathcal P(\mathcal X \times \mathcal U) \to \mathbb R$ continuous and bounded, since
\begin{align*}
    &\sup_{\boldsymbol \pi \in \Pi_N^N} \left| \mathcal L(x^{0,N}_t, \mathbb G_t^N)(f) - \mathcal L(\hat x^{0,N}_t, \hat{\mathbb G}_t^N)(f) \right| \\
    &\quad = \sup_{\boldsymbol \pi \in \Pi_N^N} \left| \mathbb E \left[ f(x^{0,N}_t, \mathbb G_t^N) \right] - \mathbb E \left[ f(\hat x^{0,N}_t, \hat{\mathbb G}_t^N) \right] \right| \\
    &\quad \leq \sum_{x^0 \in \mathcal X^0} \sup_{\boldsymbol \pi \in \Pi_N^N} \left| \mathbb E \left[ f(x^{0,N}_t, \mathbb G_t^N) \mid x^{0,N}_t = x^0 \right] \right| \\ 
    &\qquad \qquad \qquad \cdot \sup_{\boldsymbol \pi \in \Pi_N^N} \left| \mathcal L(x^{0,N}_t)(x^0) - \mathcal L(\hat x^{0,N}_t)(x^0) \right| \\
    &\qquad + \sum_{x^0 \in \mathcal X^0} \sup_{\boldsymbol \pi \in \Pi_N^N} \bigg| \mathbb E \left[ f(x^0, \hat{\mathbb G}_t^N) \mid \hat x^{0,N}_t = x^0 \right] \\
    &\qquad \qquad \qquad - \mathbb E \left[ f(x^0, \mathbb G_t^N) \mid x^{0,N}_t = x^0 \right] \bigg| \\
    &\qquad \qquad \qquad \cdot \sup_{\boldsymbol \pi \in \Pi_N^N} \mathcal L(\hat x^{0,N}_t)(x^0)
\end{align*}
where the first sum goes to zero by assumption and boundedness of $f$. For the second term, consider arbitrary fixed $x^0 \in \mathcal X^0$, $\boldsymbol \pi \in \Pi_N^N$.
Then introduce random variables $\mathbb G^N_{\boldsymbol\pi}\sim \mathcal G^N(\mu_0,(\pi_t^1(x^0), \dots, \pi_t^N(x^0)))$ and $\mathbb G^N_{\hat \pi}\sim \mathcal G^N(\mu_0,(\hat\pi_t(x^0),\dots, \hat\pi_t(x^0)) )$ for every $N \in \mathbb N$ and $\boldsymbol \pi \in \Pi^N_N$.
Then we have
\begin{align*}
     &\mathbb E \left[ f(x^0, \hat{\mathbb G}_t^N) \mid \hat x^{0,N}_t = x^0 \right]- \mathbb E \left[ f(x^0, \mathbb G_t^N) \mid x^{0,N}_t = x^0 \right] \\
     &\quad = \mathbb E \left[f(x^0, \mathbb G^N_{\hat \pi}\right] - \mathbb E \left[ f(x^0, \mathbb G^N_{\boldsymbol\pi}) \right] \, .
\end{align*}
We observe that for any $(x,u) \in \mathcal X \times \mathcal U$:
\begin{equation}
     \mathbb{E}[\mathbb G^N_{\hat \pi}(x,u)] =  \mathbb{E}[\mathbb G^N_{\boldsymbol\pi}(x,u)] \, .
     \label{eq:expectation2}
\end{equation}
For this purpose, let $x^{i,N}\sim \mu_0$ and $u^{i,N}\sim \pi^i_t(x^0, x^{i,N})$ as well as $\hat x^{i,N}\sim \mu_0$ and $\hat u^{i,N}\sim \hat\pi_t(x^0, x^{i,N})$, s.t. $(x^{i,N}, u^{i,N})_{i = 1, \dots, N}$ and $(\hat x^{i,N}, \hat u^{i,N})_{i = 1, \dots, N}$ are independent, respectively. Then for any $(x,u) \in \mathcal X \times \mathcal U$:
\begin{align*}
    &\mathbb{E}[\mathbb G^N_{\hat \pi} (x,u)] = \frac{1}{N}\sum_{i = 1}^N \mathbb{P}(\hat x^{i,N}=x, \hat u^{i,N} = u)\\
    &\quad = \frac{1}{N}\sum_{i = 1}^N \mu_0(x) \hat\pi(x,u)= \frac{1}{N}\sum_{i = 1}^N \mu_0(x) \frac{1}{N}\sum_{j=1}^N \pi^j(x,u)\\
    &\quad = \frac{1}{N}\sum_{j = 1}^N \mu_0(x) \pi^j(x,u)= \frac{1}{N}\sum_{j = 1}^N \mathbb{P}(x^{i,N}=x, u^{i,N} = u)\\
    &\quad = \mathbb{E}[\mathbb G^N_{\boldsymbol\pi}(x,u)] \, .
\end{align*}
Then by \eqref{eq:expectation2}, sub-additivity of $\mathbb E[(\cdot)^2]^\frac 1 2$ and Lemma~\ref{lem:Ginprob} (i),
\begin{align*}
     &\mathbb{E}[\lVert \mathbb G^N_{\hat \pi} - \mathbb G^N_{\boldsymbol\pi} \rVert_1^2]^{\frac 12}\\
     &\le \mathbb{E}\left[\left(\lVert \mathbb G^N_{\hat \pi} - \mathbb{E} [\mathbb G^N_{\hat \pi}]\rVert_1 +  \lVert \mathbb G^N_{\boldsymbol\pi} - \mathbb{E}[\mathbb G^N_{\boldsymbol\pi}]\rVert_1\right)^2\right]^{\frac 12}\\
     &\le \mathbb{E}[\lVert \mathbb G^N_{\hat \pi} - \mathbb{E} [\mathbb G^N_{\hat \pi}]\rVert_1^2]^{\frac 1 2} + \mathbb{E}[\lVert \mathbb G^N_{\boldsymbol\pi} - \mathbb{E} [\mathbb G^N_{\boldsymbol\pi}]\rVert_1^2]^{\frac 1 2}\\
     &\le 2 \frac{\vert \mathcal X \vert \cdot \vert \mathcal U \vert}{\sqrt{4N}} = \frac{\vert \mathcal X \vert \cdot \vert \mathcal U \vert}{\sqrt{N}} \, .
\end{align*}
Chebyshev's inequality implies
\begin{align}
     \mathbb{P}\left(\lVert \mathbb G^N_{\hat \pi} - \mathbb G^N_{\boldsymbol\pi} \rVert_1 \geq \epsilon \right) \le \frac{|\mathcal X|^2 |\mathcal U|^2}{\epsilon^2 N}
\end{align}
independent of $\boldsymbol \pi \in \Pi_N^N$.

Then analogously to the proof of Theorem~\ref{thm:uniformV},
\begin{align*}
    &\sup_{\boldsymbol \pi \in \Pi_N^N}\bigg| \mathbb E \left[ f(x^0, \hat{\mathbb G}_t^N) \mid \hat x^{0,N}_t = x^0 \right] \\
    &\qquad \qquad \qquad - \mathbb E \left[ f(x^0, \mathbb G_t^N) \mid x^{0,N}_t = x^0 \right] \bigg| \to 0
\end{align*}
can be concluded, showing the desired implication.

We now show by induction over all $t$ that for any $t \geq 0$, and any $f \colon \mathcal X^0 \times \mathcal P(\mathcal X \times \mathcal U) \to \mathbb R$ continuous and bounded, $\sup_{\boldsymbol \pi \in \Pi_N^N} \left| \mathcal L(x^{0,N}_t, \mathbb G_t^N)(f) - \mathcal L(\hat x^{0,N}_t, \hat{\mathbb G}_t^N)(f) \right| \to 0$ which by Assumption~\ref{assumption} will again imply that \eqref{eq:thm3terms} goes to zero. 

At $t=0$, we have $\mathcal L(x^{0,N}_0) = \mu_0^0 = \mathcal L(\hat x^{0,N}_0)$, implying $\sup_{\boldsymbol \pi \in \Pi_N^N} \left| \mathcal L(x^{0,N}_0, \mathbb G_0^N)(f) - \mathcal L(\hat x^{0,N}_0, \hat{\mathbb G}_0^N)(f) \right| \to 0$ for any $f \colon \mathcal X^0 \times \mathcal P(\mathcal X \times \mathcal U) \to \mathbb R$ by the prequel. Assuming the induction assumption holds at time $t$, then at time $t+1$
\begin{align*}
    &\lVert \mathcal L(x_{t+1}^{0,N}) - \mathcal L(\hat x_{t+1}^{0,N}) \rVert_1 \\
    &\quad = \sum_{x^0 \in \mathcal X^0} \left| \mathbb E \left[ P^0(x^0 \mid x^{0,N}_t, \mathbb G_t^N) - P^0(x^0 \mid \hat x_{t}^{0,N}, \hat{\mathbb G}_t^N) \right] \right| \\ 
    &\quad \to 0
\end{align*}
uniformly by induction assumption and continuity and boundedness of $P^0$, which implies the desired statement.
\end{proof}

\begin{corollary}
Under Assumption~\ref{assumption}, for any $\epsilon > 0$ there exists $N(\epsilon)$ such that for all $N > N(\epsilon)$ a policy $\pi^*$ optimal in the MFC MDP -- that is, $J(\pi^*) = \sup_{\pi \in \Pi} J(\pi)$ -- is $\epsilon$-Pareto optimal in the $N$-agent case, i.e. 
\begin{align}
    J^N(\boldsymbol \pi^N(\pi^*)) \geq \sup_{\boldsymbol \pi} J^N(\boldsymbol \pi) - \epsilon \, .
\end{align} 
\end{corollary}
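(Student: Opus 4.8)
The plan is to chain the two uniform convergence results, Theorem~\ref{thm:uniformV} and Theorem~\ref{thm:uniformAvg}, together with the mean field optimality of $\pi^*$. Fix $\epsilon > 0$. Since the claim is $J^N(\boldsymbol \pi^N(\pi^*)) \geq \sup_{\boldsymbol \pi} J^N(\boldsymbol \pi) - \epsilon$, it is equivalent to bounding $J^N(\boldsymbol \pi)$ from above by $J^N(\boldsymbol \pi^N(\pi^*)) + \epsilon$ uniformly over all tuples $\boldsymbol \pi \in \Pi_N^N$, and then taking the supremum. So I would take an arbitrary $\boldsymbol \pi \in \Pi_N^N$ and build a four-step upper bound, keeping all thresholds independent of $\boldsymbol \pi$.

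First I would replace the asymmetric tuple by its average. By Theorem~\ref{thm:uniformAvg} there is $N_1$ such that for all $N > N_1$ and all $\boldsymbol \pi \in \Pi_N^N$ we have $J^N(\boldsymbol \pi) \leq J^N(\boldsymbol \pi^N(\hat \pi)) + \epsilon/3$, where $\hat \pi = \hat \pi(\boldsymbol \pi) \in \Pi$ is the averaged mean field policy. Second I would pass from the finite-agent objective under a symmetric mean field policy to the mean field objective: by Theorem~\ref{thm:uniformV} there is $N_2$ such that for all $N > N_2$ both $J^N(\boldsymbol \pi^N(\hat \pi)) \leq J(\hat \pi) + \epsilon/3$ and $J(\pi^*) \leq J^N(\boldsymbol \pi^N(\pi^*)) + \epsilon/3$ hold; the former holds for every $\hat \pi = \hat \pi(\boldsymbol \pi)$ at once because Theorem~\ref{thm:uniformV} controls the supremum over all of $\Pi$. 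Third I would invoke optimality of $\pi^*$ in the MFC MDP, namely $J(\hat \pi) \leq \sup_{\pi \in \Pi} J(\pi) = J(\pi^*)$. Setting $N(\epsilon) = \max(N_1, N_2)$, these combine for every $N > N(\epsilon)$ and every $\boldsymbol \pi \in \Pi_N^N$ into
\begin{align*}
    J^N(\boldsymbol \pi) &\leq J^N(\boldsymbol \pi^N(\hat \pi)) + \tfrac{\epsilon}{3} \leq J(\hat \pi) + \tfrac{2\epsilon}{3} \\
    &\leq J(\pi^*) + \tfrac{2\epsilon}{3} \leq J^N(\boldsymbol \pi^N(\pi^*)) + \epsilon \, ,
\end{align*}
and taking the supremum over $\boldsymbol \pi$ yields the corollary.

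The step I expect to be most delicate is the uniformity of the bound over the asymmetric tuple $\boldsymbol \pi$. Because the averaged policy $\hat \pi$ depends on $\boldsymbol \pi$, the chain would break if the convergence in Theorem~\ref{thm:uniformV} were only pointwise in the mean field policy: one could not then guarantee a single threshold covering all the tuples and their varying averages simultaneously. What rescues the argument is precisely that both theorems are stated as suprema over $\Pi$ and over $\Pi_N^N$ respectively, so one threshold $N(\epsilon)$ controls every tuple and its average at once. Accordingly, I would be explicit in writing the bounds uniformly in $\boldsymbol \pi$ rather than silently fixing one tuple, and only take the supremum at the very end.
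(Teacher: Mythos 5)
Your proposal is correct and follows essentially the same argument as the paper: the identical four-term telescoping decomposition via the averaged policy $\hat \pi(\boldsymbol \pi)$, Theorem~\ref{thm:uniformAvg}, Theorem~\ref{thm:uniformV} (applied twice), and mean field optimality of $\pi^*$, with the same $\epsilon/3$ split. Your explicit remark about uniformity in $\boldsymbol \pi$ is exactly what the paper's proof encodes by writing each term as a supremum over $\boldsymbol \pi$, so there is no substantive difference.
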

\begin{proof}
By Theorem~\ref{thm:uniformV} and Theorem~\ref{thm:uniformAvg}, there exists $N' \in \mathbb N$ such that for average policy $\hat \pi$ of $\boldsymbol \pi$ and all $N > N'$ we have
\begin{align*}
    &\sup_{\boldsymbol \pi} \left( J^N(\boldsymbol \pi) - J^N(\boldsymbol \pi^N(\pi^*)) \right) \\
    &\quad \leq \sup_{\boldsymbol \pi} \left( J^N(\boldsymbol \pi) - J^N(\boldsymbol \pi^N(\hat \pi)) \right) \\
    &\qquad + \sup_{\boldsymbol \pi} \left( J^N(\boldsymbol\pi^N(\hat \pi)) - J(\hat \pi) \right) \\
    &\qquad + \sup_{\boldsymbol \pi} \left( J(\hat \pi) - J(\pi^*) \right) \\
    &\qquad + \sup_{\boldsymbol \pi} \left( J(\pi^*) - J^N(\boldsymbol \pi^N(\pi^*)) \right) \\
    &\quad < \frac{\epsilon}{3} + \frac{\epsilon}{3} + 0 + \frac{\epsilon}{3} = \epsilon \, .
\end{align*}
Reordering terms gives the desired inequality.
\end{proof}

\section{DYNAMIC PROGRAMMING PRINCIPLE}
The following dynamic programming principle for the MFC MDP is a standard result, for which the MDP state will be only the environment state, see e.g. \cite{motte2019mean, gu2020q}.

Define action-value function $Q \colon \mathcal X^0 \times \mathcal H \to \mathbb R$,
\begin{multline}
    Q(x^0, h) \coloneqq \sup_{\pi \in \Pi} \mathbb E \left[ \sum_{t=0}^\infty \gamma^t r(x^0_t, \mathbb G(\mu_0, \pi_t(x^0_t))) \right. \\ \left. \mid x^0_0 = x^0, \pi_0(x^0) = h \right] \, .
\end{multline} 
Note that by boundedness of $r$, we trivially have
\begin{align*}
    \left| Q \right| \leq \frac{R}{1-\gamma} \, .
\end{align*}

As we have an MDP with finite state space $\mathcal X^0$, the following Bellman equation will hold, see \cite{puterman2014markov}.

\begin{theorem}
The Bellman equation
\begin{multline}
    Q(x^0, h) = r(x^0, \mathbb G(\mu_0, h)) \\+ \gamma \mathbb E_{\tilde x^0 \sim P^0(x^0, \mathbb G(\mu_0, h))} \left[ \sup_{\tilde h \in \mathcal H} Q(\tilde x^0, \tilde h) \right]
\end{multline}
holds for all $x^0 \in \mathcal X^0, h \in \mathcal H$.
\end{theorem}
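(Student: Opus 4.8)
The plan is to recognize the MFC problem as a standard infinite-horizon discounted Markov decision process whose state space is the finite set $\mathcal{X}^0$, whose action at state $x^0$ is a decision rule $h \in \mathcal{H}$, whose one-step reward is $(x^0, h) \mapsto r(x^0, \mathbb{G}(\mu_0, h))$, and whose transition kernel is $(x^0, h) \mapsto P^0(x^0, \mathbb{G}(\mu_0, h))$. The function $Q$ defined above is then exactly the optimal action-value function of this MDP, so the claim is the usual $Q$-form of the Bellman optimality equation. Before anything else I would record the regularity that makes the standard theory applicable: the action space $\mathcal{H} \cong \mathcal{P}(\mathcal{U})^{\mathcal{X}}$ is a finite product of simplices and hence compact; the map $h \mapsto \mathbb{G}(\mu_0, h)$ is continuous (indeed affine, since $\mathbb{G}(\mu_0,h)(x,u) = h(x,u)\mu_0(x)$), so by Assumption~\ref{assumption} both the reward and the transition kernel are continuous in $h$; and $|r| \le R$ gives $|Q| \le R/(1-\gamma)$.

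Then I would derive the equation directly. Fixing $x^0$ and $h$, the conditioning $x^0_0 = x^0$, $\pi_0(x^0) = h$ pins the $t=0$ summand at $r(x^0, \mathbb{G}(\mu_0, h))$, so it factors out of the supremum. For the remaining tail I would condition on $x^0_1 = \tilde x^0$, whose law is $P^0(x^0, \mathbb{G}(\mu_0, h))$ and, crucially, does not depend on the tail policy because $h$ is held fixed. After reindexing $s = t-1$ and using that admissible policies are Markov, the time-shifted policy $\sigma\pi = (\pi_1, \pi_2, \dots)$ is itself admissible, so each tail expectation from $\tilde x^0$ is at most $Q(\tilde x^0, \pi_1(\tilde x^0)) \le \sup_{\tilde h} Q(\tilde x^0, \tilde h)$; this already yields the ``$\le$'' inequality $Q(x^0,h) \le r(x^0,\mathbb{G}(\mu_0,h)) + \gamma\,\mathbb{E}_{\tilde x^0}[\sup_{\tilde h} Q(\tilde x^0,\tilde h)]$.

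The hard part is the reverse inequality, i.e. interchanging the supremum over tail policies with the expectation over the random next state $\tilde x^0$. Gluing a separate near-optimal \emph{non-stationary} Markov policy per starting state $\tilde x^0$ fails, since at later times distinct starting states can reach a common state yet demand different actions, which a single Markov policy cannot supply. I would therefore route around this via a contraction argument: introduce the Bellman optimality operator $T$ on bounded $W \colon \mathcal{X}^0 \to \mathbb{R}$ by $(TW)(x^0) = \sup_{h \in \mathcal{H}} [\, r(x^0,\mathbb{G}(\mu_0,h)) + \gamma \sum_{\tilde x^0} P^0(\tilde x^0 \mid x^0, \mathbb{G}(\mu_0,h))\, W(\tilde x^0)\,]$. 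Compactness of $\mathcal{H}$ with the continuity noted above makes each inner supremum attained, and $T$ is a $\gamma$-contraction in $\lVert \cdot \rVert_\infty$, so Banach's fixed point theorem supplies a unique fixed point $W^\ast$.

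Finally I would identify $W^\ast$ with the optimal value $V(x^0) := \sup_{\tilde h} Q(x^0, \tilde h)$ through the two standard inequalities. For any policy $\pi$ with state-dependent value $J^\pi(x^0)$, peeling one step gives $J^\pi \le T J^{\sigma\pi}$; iterating and using that $T$ is a monotone $\gamma$-contraction with all shifted values uniformly bounded by $R/(1-\gamma)$ forces $J^\pi \le W^\ast$, hence $V \le W^\ast$. Conversely, for any $\epsilon > 0$ the finiteness of $\mathcal{X}^0$ lets me select state-by-state an $\epsilon$-maximizing $h$ in $TW^\ast = W^\ast$, defining a \emph{stationary} policy $\pi^\ast$ whose evaluation obeys $W^\ast \le J^{\pi^\ast} + \tfrac{\epsilon}{1-\gamma}\mathbf 1 \le V + \tfrac{\epsilon}{1-\gamma}\mathbf 1$; letting $\epsilon \to 0$ gives $W^\ast \le V$. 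Thus $V = W^\ast = TW^\ast$, which combined with the ``$\le$'' step becomes the claimed $Q$-Bellman equation. Since these are precisely the hypotheses (finite states, compact actions, bounded continuous data, discounting) under which the results cited from \cite{puterman2014markov} apply, the conclusion may alternatively be quoted directly; throughout, the sole genuine obstacle is the supremum--expectation interchange, which the contraction together with the trivial finite-state selection resolves.
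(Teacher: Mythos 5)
Your proposal is correct, but it takes a genuinely different route from the paper: the paper gives no proof of this theorem at all, remarking only that the model is a discounted MDP with finite state space $\mathcal X^0$ and citing \cite{puterman2014markov} for the standard result. You instead verify the hypotheses explicitly (compactness of $\mathcal H$ as a finite product of simplices, continuity of reward and kernel in $h$ from Assumption~\ref{assumption} composed with the affine map $h \mapsto \mathbb G(\mu_0, h)$, boundedness, discounting) and then derive the equation from first principles. In doing so you correctly isolate the one genuinely delicate point, which the paper's citation silently absorbs: a single Markov policy cannot glue together different near-optimal tail policies per realized next state $\tilde x^0$, since distinct initial states can reach a common state at a later time; your fix via the $\gamma$-contraction $T$ on $\mathbb R^{\mathcal X^0}$ plus a stationary $\epsilon$-maximizing selector (trivially available since $\mathcal X^0$ is finite) is the standard and correct resolution. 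Note that your contraction acts on value functions $V \colon \mathcal X^0 \to \mathbb R$, whereas the paper's own contraction argument appears only afterwards, in Lemma~\ref{lem:unique-q}, acts on $Q$-functions on $\mathcal X^0 \times \mathcal H$, and takes the present theorem as an input; your argument effectively subsumes that step and foreshadows the stationary-policy corollary. One small elision at the end: $V = TW^\ast$ gives the Bellman equation only at the supremum over $h$, while the theorem asserts an identity for each fixed $h$; to close the reverse inequality you should add one line prepending $h$ to the stationary $\epsilon$-optimal policy $\pi^\ast$, giving $Q(x^0, h) \geq r(x^0, \mathbb G(\mu_0, h)) + \gamma\, \mathbb E_{\tilde x^0}\left[ V(\tilde x^0) \right] - \gamma \epsilon/(1-\gamma)$ and letting $\epsilon \to 0$ — all the ingredients for this are already in your write-up. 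As for what each approach buys: the paper's citation is economical for a standard fact, but since the action space here is a compact continuum rather than a finite set, one must invoke the compact-action, continuous-data versions of the textbook results, and your proof makes exactly that verification explicit.
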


In the following, we obtain existence of an optimal stationary policy by compactness of $\mathcal H$ and continuity of $Q$, which shall be inherited from the continuity of $r$ and $P^0$.

\begin{lemma} \label{lem:unique-q}
The unique function that satisfies the Bellman equation is given by $Q$. Further, if there exists $h_{x^0} \in \argmax_{h \in \mathcal H} Q(x^0, h)$ for any $x^0 \in \mathcal X^0$, then the policy $\pi^*$ with $\pi^*_t(x^0) = h_{x^0}$ is an optimal stationary policy.
\end{lemma}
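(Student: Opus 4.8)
The plan is to treat this as a standard discounted MDP statement and prove it in two parts: uniqueness via the Banach fixed-point theorem, and optimality of $\pi^*$ via a policy-evaluation comparison of fixed points.

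First, for uniqueness, I would introduce the Bellman operator $T$ on the space $\mathcal B$ of bounded functions $W \colon \mathcal X^0 \times \mathcal H \to \mathbb R$ equipped with the supremum norm, defined by
\[(TW)(x^0, h) = r(x^0, \mathbb G(\mu_0, h)) + \gamma \mathbb E_{\tilde x^0 \sim P^0(x^0, \mathbb G(\mu_0, h))}\left[\sup_{\tilde h \in \mathcal H} W(\tilde x^0, \tilde h)\right].\]
Since $\mathcal X^0$ is finite, $r$ is bounded by $R$, and $W$ is bounded, $TW$ is again bounded, so $T \colon \mathcal B \to \mathcal B$. The key estimate is that for any $W_1, W_2 \in \mathcal B$, using $|\sup_a f - \sup_a g| \le \sup_a |f - g|$ inside the expectation,
\[\|TW_1 - TW_2\|_\infty \le \gamma \|W_1 - W_2\|_\infty,\]
so $T$ is a $\gamma$-contraction. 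As $(\mathcal B, \|\cdot\|_\infty)$ is complete and $\gamma \in (0,1)$, Banach's fixed-point theorem yields a unique fixed point. The preceding theorem shows that $Q$ is a fixed point of $T$; hence $Q$ is the unique solution of the Bellman equation.

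Next, for optimality of $\pi^*$, I would set $V(x^0) \coloneqq \sup_{h \in \mathcal H} Q(x^0, h)$, which by the definition of $Q$ equals $\sup_{\pi \in \Pi} \mathbb E[\sum_t \gamma^t r(x^0_t, \mathbb G_t) \mid x^0_0 = x^0]$. Given a maximizer $h_{x^0} \in \argmax_h Q(x^0, h)$ for each $x^0$, I would consider the stationary policy $\pi^*$ with $\pi^*_t(x^0) = h_{x^0}$, whose infinite-horizon discounted value $V_{\pi^*}$ satisfies the policy-evaluation equation
\[V_{\pi^*}(x^0) = r(x^0, \mathbb G(\mu_0, h_{x^0})) + \gamma \mathbb E_{\tilde x^0 \sim P^0(x^0, \mathbb G(\mu_0, h_{x^0}))}[V_{\pi^*}(\tilde x^0)].\]
Because $h_{x^0}$ attains the supremum, the Bellman equation for $Q$ gives $V(x^0) = Q(x^0, h_{x^0}) = r(x^0, \mathbb G(\mu_0, h_{x^0})) + \gamma \mathbb E[V(\tilde x^0)]$, so $V$ solves the same equation. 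This is the fixed-point equation of the policy-evaluation operator $T^{\pi^*}$ on $\mathbb R^{\mathcal X^0}$, which is again a $\gamma$-contraction and thus has a unique fixed point; consequently $V_{\pi^*} = V$. Averaging over $x^0_0 \sim \mu^0_0$ then yields $J(\pi^*) = \mathbb E[V_{\pi^*}(x^0_0)] = \mathbb E[V(x^0_0)] = \sup_{\pi} J(\pi)$, so $\pi^*$ is optimal.

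The main obstacle I anticipate is not the contraction arithmetic, which is routine, but the bookkeeping forced by the continuous action set $\mathcal H$: one must confirm that $T$ preserves boundedness, that $\sup_{\tilde h} W(\tilde x^0, \tilde h)$ is finite, and that the estimate $|\sup f - \sup g| \le \sup|f-g|$ applies uniformly over all of $\mathcal H$ so the contraction holds everywhere. The other delicate point is verifying that the stationary policy's infinite-horizon value genuinely satisfies the policy-evaluation equation and that $V$ coincides with the true optimum $\sup_\pi J(\pi)$; both follow from $\gamma \in (0,1)$ and boundedness of $r$, but should be argued rather than merely asserted. Note that existence of the maximizer $h_{x^0}$ is assumed in the hypothesis, so it need not be established here.
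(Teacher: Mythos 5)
Your proposal is correct and follows essentially the same route as the paper: uniqueness via a $\gamma$-contraction argument and the Banach fixed-point theorem on a sup-norm-complete space of bounded functions, and optimality by showing the policy-evaluation fixed point for $\pi^*$ coincides with the optimal value because $h_{x^0}$ attains the supremum. The only cosmetic difference is that you run the policy-evaluation comparison on the state value function $V(x^0) = \sup_{h \in \mathcal H} Q(x^0, h)$ over $\mathcal X^0$ and then average over $x^0_0 \sim \mu^0_0$, whereas the paper stays at the level of the action-value function, showing $Q^{\pi^*} = Q$ as fixed points of $B^{\pi^*}$ on $\mathcal X^0 \times \mathcal H$ and citing standard MDP theory for the final optimality implication.
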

\begin{proof}
For uniqueness, define the space of $\frac{R}{1-\gamma}$-bounded functions $\mathcal Q \coloneqq \{ f \colon \mathcal X^0 \times \mathcal H \to [-\frac{R}{1-\gamma}, \frac{R}{1-\gamma}] \}$ and the Bellman operator $B \colon \mathcal Q \to \mathcal Q$ defined by
\begin{multline}
    (BQ)(x^0, h) = r(x^0, \mathbb G(\mu_0, h)) \\+ \gamma \mathbb E_{\tilde x^0 \sim P^0(x^0, \mathbb G(\mu_0, h))} \left[ \sup_{\tilde h \in \mathcal H} Q(\tilde x^0, \tilde h) \right] \, .
\end{multline}
We show that $\mathcal Q$ is a complete metric space under the supremum norm. Let $(Q_n)_{n \in \mathbb N}$ be a Cauchy sequence of functions $Q_n \in \mathcal Q$. Then by definition, for any $\epsilon > 0$ there exists $n' \in \mathbb N$ such that for all $n, m > n'$ we have
\begin{align*}
    \lVert Q_n - Q_m \rVert_\infty < \epsilon \\
    \implies \forall x^0 \in \mathcal X^0, h \in \mathcal H \colon \left| Q_n(x^0, h) - Q_m(x^0, h) \right| < \epsilon
\end{align*}
such that for all $x^0 \in \mathcal X^0, h \in \mathcal H$ there exists a value $c_{x^0, h} \in [-\frac{R}{1-\gamma}, \frac{R}{1-\gamma}]$ for which $Q_n(x^0, h) \to c_{x^0, h}$. Define the function $Q' \in \mathcal Q$ by $Q'(x^0, h) = c_{x^0, h}$, then we have
\begin{align*}
    &\left| Q_n(x^0, h) - Q'(x^0, h) \right| \\
    &\quad = \lim_{m \to \infty} \left| Q_n(x^0, h) - Q_m(x^0, h) \right| < \epsilon 
\end{align*}
for all $x^0 \in \mathcal X^0, h \in \mathcal H, n > n'$, and hence $Q_n \to Q' \in \mathcal Q$ as $n \to \infty$. This implies completeness of $(\mathcal Q, \lVert \cdot \rVert_\infty)$.

We now show that $B$ is a contraction under the supremum norm, i.e.
\begin{align*}
    \lVert BQ_1 - BQ_2 \rVert_\infty \leq C \lVert Q_1 - Q_2 \rVert_\infty
\end{align*}
for some $C < 1$. Define the shorthand $\tilde x^0 \sim P^0(x^0, \mathbb G(\mu_0, h))$. We have
\begin{align*}
    &\lVert BQ_1 - BQ_2 \rVert_\infty \\
    &\quad = \sup_{x^0 \in \mathcal X^0, h \in \mathcal H} \left| BQ_1(x^0, h) - BQ_2(x^0, h) \right| \\
    &\quad \leq \sup_{x^0 \in \mathcal X^0, h \in \mathcal H} \gamma \mathbb E_{\tilde x^0} \left[ \left| \sup_{\tilde h \in \mathcal H} Q_1(\tilde x^0, \tilde h) - \sup_{\tilde h \in \mathcal H} Q_2(\tilde x^0, \tilde h) \right| \right] \\
    &\quad \leq \sup_{x^0 \in \mathcal X^0, h \in \mathcal H} \gamma \lVert Q_1 - Q_2 \rVert_\infty
\end{align*}
with $\gamma < 1$. Therefore, by Banach fixed point theorem, $B$ has the unique fixed point $Q$.

For optimality, define the policy action-value function $Q^\pi$ for $\pi \in \Pi$ as the fixed point of $B^\pi \colon \mathcal Q \to \mathcal Q$ defined by
\begin{align*}
    (B^\pi Q)(x^0, h) = r(x^0, \mathbb G(\mu_0, h)) + \gamma \mathbb E_{\tilde x^0} \left[ Q(\tilde x^0, \pi(\tilde x^0)) \right] \, .
\end{align*}
From this, we immediately have
\begin{align*}
    Q^{\pi^*}(x^0, h) 
    &= r(x^0, \mathbb G(\mu_0, h)) + \gamma \mathbb E_{\tilde x^0} \left[ Q(\tilde x^0, \pi^*(\tilde x^0)) \right] \\
    &= r(x^0, \mathbb G(\mu_0, h)) + \gamma \mathbb E_{\tilde x^0} \left[ \sup_{\tilde h \in \mathcal H} Q(\tilde x^0, \tilde h) \right] \\
    &= Q(x^0, h)
\end{align*}
which implies that $\pi^*$ is optimal, see also \cite{puterman2014markov}.
\end{proof}

\begin{lemma} \label{lem:qcont}
The action-value function $Q$ is continuous.
\end{lemma}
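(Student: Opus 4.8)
The plan is to exploit the fixed-point characterization from Lemma~\ref{lem:unique-q}, namely $Q = BQ$, and to argue that the Bellman operator $B$ produces a continuous function regardless of the (bounded) function it is applied to. Since $\mathcal X^0$ is finite and carries the discrete metric, continuity of $Q$ on $\mathcal X^0 \times \mathcal H$ is equivalent to continuity of $h \mapsto Q(x^0, h)$ for each fixed $x^0 \in \mathcal X^0$. Hence it suffices to show that $h \mapsto (BQ)(x^0, h)$ is continuous and then invoke $Q = BQ$.

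First I would record the elementary fact that $h \mapsto \mathbb G(\mu_0, h)$ is continuous from $\mathcal H$ (with the supremum metric) into $(\mathcal P(\mathcal X \times \mathcal U), \lVert \cdot \rVert_1)$. Indeed, since $\mathbb G(\mu_0, h)(x,u) = h(x,u)\mu_0(x)$ is affine in $h$, one has $\lVert \mathbb G(\mu_0, h) - \mathbb G(\mu_0, h')\rVert_1 = \sum_{x \in \mathcal X} \mu_0(x)\lVert h(x) - h'(x)\rVert_1$, which is controlled by the supremum metric on $\mathcal H$ using $\sum_x \mu_0(x) = 1$. Composing with the continuity of $r$ and $P^0$ from Assumption~\ref{assumption} then yields that both $h \mapsto r(x^0, \mathbb G(\mu_0, h))$ and $h \mapsto P^0(x^0, \mathbb G(\mu_0, h))$ are continuous.

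Next I would split $(BQ)(x^0, h)$ into its reward term and its expectation term. The reward term $r(x^0, \mathbb G(\mu_0, h))$ is continuous by the previous paragraph. For the expectation term, the key observation is that, because $\mathcal X^0$ is finite, it is merely the finite sum $\gamma \sum_{\tilde x^0 \in \mathcal X^0} P^0(\tilde x^0 \mid x^0, \mathbb G(\mu_0, h)) \cdot V(\tilde x^0)$, where $V(\tilde x^0) \coloneqq \sup_{\tilde h \in \mathcal H} Q(\tilde x^0, \tilde h)$ does \emph{not} depend on $h$ and is finite, since $|V| \leq \frac{R}{1-\gamma}$ by boundedness of $Q$. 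Each coefficient $h \mapsto P^0(\tilde x^0 \mid x^0, \mathbb G(\mu_0, h))$ is continuous, so this finite sum of continuous functions is continuous in $h$. Therefore $BQ$ is continuous, and since $Q = BQ$, the function $Q$ is continuous.

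The step I expect to require the most care is the expectation term: one must resist treating $\sup_{\tilde h \in \mathcal H} Q(\tilde x^0, \tilde h)$ as an object that varies with the outer argument $h$. Once it is recognized as a fixed bounded constant for each $\tilde x^0$ and the finiteness of $\mathcal X^0$ is used to reduce the expectation to a finite sum, continuity follows from the continuity of $P^0$ and $\mathbb G$ alone, neatly sidestepping any interchange of a supremum with a limit. Equivalently, one could phrase this as observing that $B$ maps the closed subspace of continuous bounded functions of $\mathcal Q$ into itself, so that the Banach fixed point identified in Lemma~\ref{lem:unique-q} necessarily lies in that subspace; the direct argument via $Q = BQ$ is the shorter route.
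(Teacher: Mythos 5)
Your proof is correct and follows essentially the same route as the paper: both work directly from the Bellman equation $Q = BQ$, observe that $\sup_{\tilde h \in \mathcal H} Q(\tilde x^0, \tilde h)$ is a fixed quantity bounded by $\frac{R}{1-\gamma}$ independent of the outer argument $h$, reduce the expectation over the finite set $\mathcal X^0$ to a finite sum, and conclude via the continuity of $r$, $P^0$, and $\mathbb G$ under Assumption~\ref{assumption}. Your explicit verification that $h \mapsto \mathbb G(\mu_0, h)$ is continuous (via $\lVert \mathbb G(\mu_0,h) - \mathbb G(\mu_0,h')\rVert_1 = \sum_{x} \mu_0(x) \lVert h(x) - h'(x) \rVert_1$) is a detail the paper asserts without proof, and is a welcome addition.
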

\begin{proof}
We will show as $x^0_n \to x^0 \in \mathcal X^0$ and $h_n \to h \in \mathcal H$,
\begin{align*}
    Q(x^0_n, h_n) \to Q(x^0, h) \, .
\end{align*}
By the Bellman equation, we immediately have
\begin{align*}
    &| Q(x^0_n, h_n) - Q(x^0, h) | \\
    &\quad \leq \left| r(x^0_n, \mathbb G(\mu_0, h_n)) - r(x^0, \mathbb G(\mu_0, h)) \right| \\
    &\qquad + \left| \gamma \sum_{\tilde x^0 \in \mathcal X^0} \left( P^0(\tilde x^0 \mid x^0_n, \mathbb G(\mu_0, h_n)) \right.\right. \\
    &\qquad \qquad \qquad \left.\left. - P^0(\tilde x^0 \mid x^0, \mathbb G(\mu_0, h)) \right) \sup_{\tilde h \in \mathcal H} Q(\tilde x^0, \tilde h)  \right| \\
    &\quad \leq \left| r(x^0_n, \mathbb G(\mu_0, h_n)) - r(x^0, \mathbb G(\mu_0, h)) \right| \\
    &\qquad +  \frac{\gamma R}{1-\gamma} \sum_{\tilde x^0 \in \mathcal X^0} \left| P^0(\tilde x^0 \mid x^0_n, \mathbb G(\mu_0, h_n)) \right. \\
    &\qquad \qquad \qquad \left. - P^0(\tilde x^0 \mid x^0, \mathbb G(\mu_0, h)) \right| \to 0
\end{align*}
since $r, P^0, \mathbb G$ are continuous and $Q$ is bounded.
\end{proof}

\begin{corollary}
There exists an optimal stationary policy $\pi^*\colon \mathcal X^0 \to \mathcal H$ such that $Q^{\pi^*} = Q$.
\end{corollary}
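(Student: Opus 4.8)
The plan is to combine the two preceding lemmas, whose hypotheses already assemble the corollary. Lemma~\ref{lem:unique-q} guarantees that once a maximiser $h_{x^0} \in \argmax_{h \in \mathcal H} Q(x^0, h)$ exists for every $x^0 \in \mathcal X^0$, the stationary policy $\pi^*$ defined by $\pi^*_t(x^0) = h_{x^0}$ is optimal, and the chain of equalities in the proof of that lemma shows directly that $Q^{\pi^*} = Q$. Hence the only thing left to establish is the nonemptiness of the argmax set for each environment state.

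First I would argue that the decision-rule space $\mathcal H = \{ h \colon \mathcal X \to \mathcal P(\mathcal U) \}$ is compact. Since $\mathcal X$ and $\mathcal U$ are finite, each $\mathcal P(\mathcal U)$ is a closed and bounded probability simplex in $\mathbb R^{\mathcal U}$, hence compact; $\mathcal H$ is then a finite product of such simplices indexed by the finite set $\mathcal X$, so under the supremum metric it is compact as a finite product of compact spaces. For fixed $x^0 \in \mathcal X^0$, the map $h \mapsto Q(x^0, h)$ is continuous by Lemma~\ref{lem:qcont}. By the extreme value theorem, a real-valued continuous function on a nonempty compact set attains its supremum, so $\argmax_{h \in \mathcal H} Q(x^0, h)$ is nonempty. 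As $\mathcal X^0$ is finite, I can select one maximiser $h_{x^0}$ for each $x^0$ without any measurability or choice subtleties.

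Finally, I would invoke Lemma~\ref{lem:unique-q} with the family $\{ h_{x^0} \}_{x^0 \in \mathcal X^0}$ to conclude that $\pi^*_t(x^0) = h_{x^0}$ yields an optimal stationary policy with $Q^{\pi^*} = Q$. The argument carries no genuine obstacle: the only mildly technical point is verifying compactness of $\mathcal H$ under the supremum metric, after which the corollary is an immediate consequence of the continuity of $Q$ established in Lemma~\ref{lem:qcont} together with the extreme value theorem.
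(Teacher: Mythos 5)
Your proposal is correct and follows essentially the same route as the paper: continuity of $Q$ from Lemma~\ref{lem:qcont}, compactness of $\mathcal H$, the extreme value theorem to obtain a maximiser $h_{x^0}$ for each $x^0 \in \mathcal X^0$, and then Lemma~\ref{lem:unique-q} to conclude. Your explicit verification that $\mathcal H$ is compact as a finite product of probability simplices is a detail the paper leaves implicit, but it is the same argument.
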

\begin{proof}
By Lemma~\ref{lem:qcont}, $Q$ is continuous. Furthermore, $\mathcal H$ is compact. By the extreme value theorem, there exists $h_{x^0} \in \argmax_{h \in \mathcal H} Q(x^0, h)$ for any $x^0 \in \mathcal X^0$. By Lemma~\ref{lem:unique-q}, there exists an optimal stationary policy $\pi^*$.
\end{proof}

Since $\mathcal H$ is continuous, general exact solutions are difficult. Instead, we apply reinforcement learning with stochastic policies to find an optimal stationary policy.

\section{EXPERIMENTS}
We compare the empirical performance of the mean field solution in the aforementioned scheduling problem. Since there exist few theoretical guarantees for tractable multi-agent reinforcement learning methods \cite{zhang2021multi}, we compare our approach (MF) to empirically effective independent learning (IL) \cite{tan1993multi}, i.e. applying single-agent RL to each separate agent (NA), as well as the well-known Join-Shortest-Queue (JSQ) algorithm \cite{jsq_mf}, where agents choose the shortest queue accessible and otherwise randomly. To make independent learning more tractable, we also share policy parameters between all agents using parameter sharing (PS) \cite{gupta2017cooperative} and train each policy via the PPO algorithm \cite{schulman2017proximal} using the RLlib 1.2.0 Pytorch implementation \cite{liang2018rllib} for $400,000$ time steps in the $N$-agent cases and $2$ million time steps in the MF case, which is sufficient for convergence of MF and $N$-agent policies up to $N=4$, after which $N$-agent training becomes unstable under the shared hyperparameters in Table~\ref{table:hyperparameters} and continues to fail even with more time steps.

For policies and critics, we use separate feedforward networks with two hidden layers of 256 nodes and $\tanh$ activations. In the mean field case the policy outputs parameters $\boldsymbol \mu, \boldsymbol \sigma$ of a diagonal Gaussian distribution over actions, which are sampled and clipped between $0$ and $1$. We normalize each of these output values such that they give the probability of assigning to an accessible queue given some agent state, i.e. a shared lower-level decision rule $h \in \mathcal H$ for all agents. A visualization of this process can be found in Figure~\ref{fig:overview-impl}. 

Note that we use stochastic policies as required by stochastic policy gradient methods, though we can easily obtain a deterministic policy if necessary by simply using the mean parameter of the Gaussian distribution. In the $N$-agent case, we output queue assignment probabilities for each of the agents via a standard softmax final layer. Invalid assignments to queues that are not accessible by an agent are treated as randomly sampling one from all accessible queues.

\begin{table}
\centering
\caption{Parameter and hyperparameter settings used in the experiments of this work.}
\label{table:hyperparameters}
\begin{tabular}{@{}ccc@{}}
\toprule
Symbol     & Function          & Value     \\ 
\midrule
$c_d$ &   Packet drop penalty &  $1$ \\
$M$ &   Number of queues &  $2$ \\
$B_i$ &   Queue buffer sizes &  $5$ \\
$\Delta T$ &    Time step size   & \SI{0.5}{\second}    \\
$\lambda$ &   Packet arrival rate &  $(3M-1)$ \, \si{\per\second} \\
$\beta$ &   Queue servicing rate &  \SI{3}{\per\second} \\
$\gamma$ &   Discount factor &  $0.99$ \\
\midrule
        &  PPO                  & \\ 
\midrule
$l_{r}$ &   Learning rate & \num{5e-5}\\
$\lambda_\mathrm{PPO}$ &   GAE coefficient & $0.2$ \\
$\beta_\mathrm{PPO}$ &   Initial KL coefficient & $0.2$ \\
$d_\mathrm{targ}$ &   KL target & $0.01$ \\
$\epsilon$ &  Clip parameter & $0.3$ \\
$B$ &  Training batch size & $4000$ \\
$B_{m}$ & SGD mini-batch size & $128$ \\
$k$ &   SGD iterations per batch & $30$ \\
\bottomrule
\end{tabular}
\end{table}

\begin{figure}
    \centering
    \includegraphics[width=0.75\linewidth]{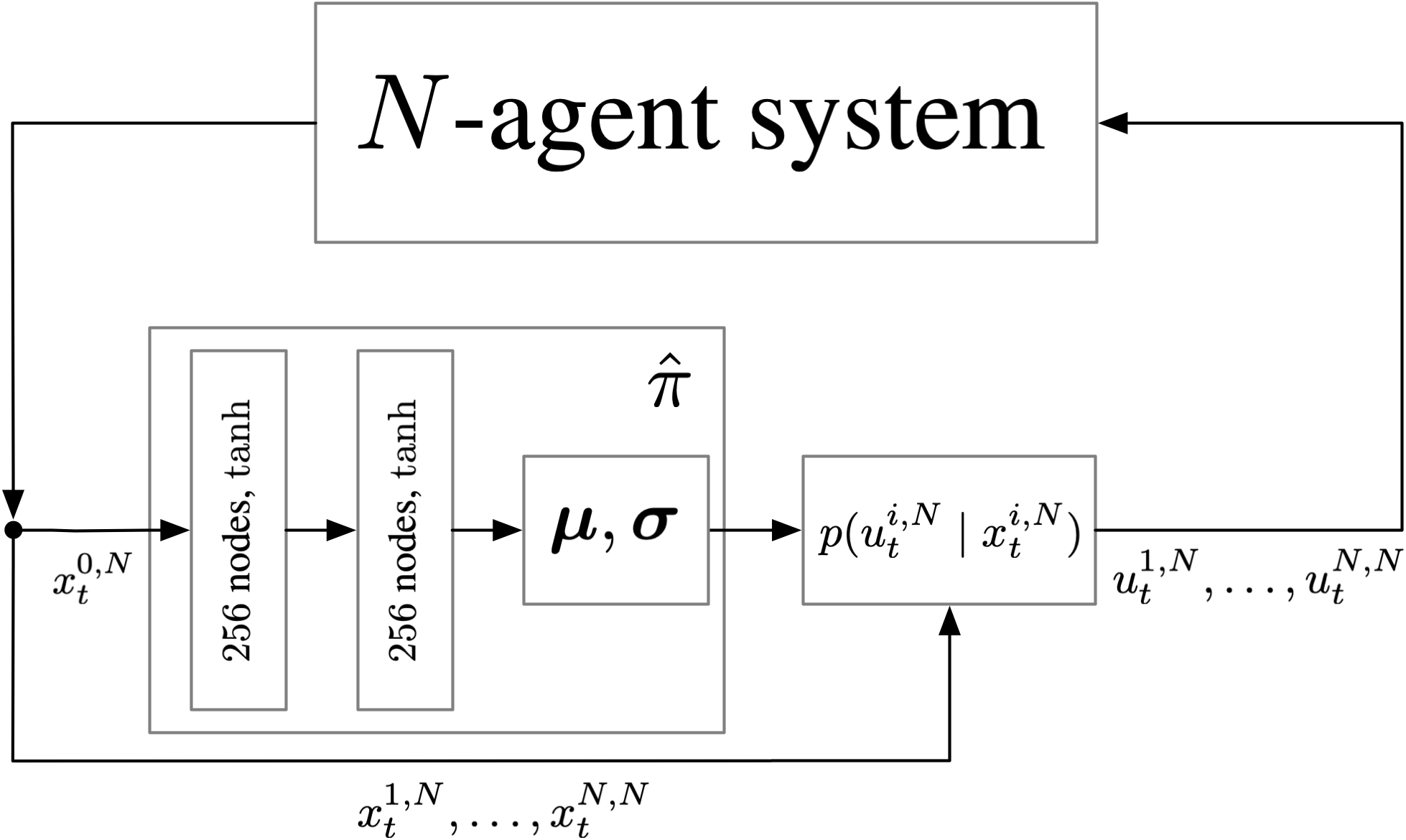}
    \caption{Overview of mean field control application in $N$-agent systems: Conditional on the environment state $x_t^{0,N}$, the upper-level mean field policy $\hat \pi$ outputs a sampled, shared lower-level policy for all agents $i$, from which random actions $u_t^{i,N}$ are sampled conditional on local agent states $x_t^{i,N}$.}
    \label{fig:overview-impl}
\end{figure}

As can be seen in Figure~\ref{fig:performance} for $\mu_0$ given such that the probability of access to both queues is $0.6$ and otherwise uniformly random, the mean field solution reaches its mean field performance in the $N$-agent case as $N$ grows large. This validates our theoretical findings empirically. Our solution further appears to outperform NA and PS for sufficiently many agents, as IL approaches increasingly fail to learn due to the credit assignment problem. 

Moreover, our best learned policy is close to JSQ and competitive with slight irregularities at $b_0 = 0$. Observe in Figure~\ref{fig:performance} that the MF policy gives an interpretable solution. As a queue becomes more filled, the optimal solution will be more likely to avoid assignment of packets to that queue.

\begin{figure}
	\centering
	\includegraphics[width=\linewidth]{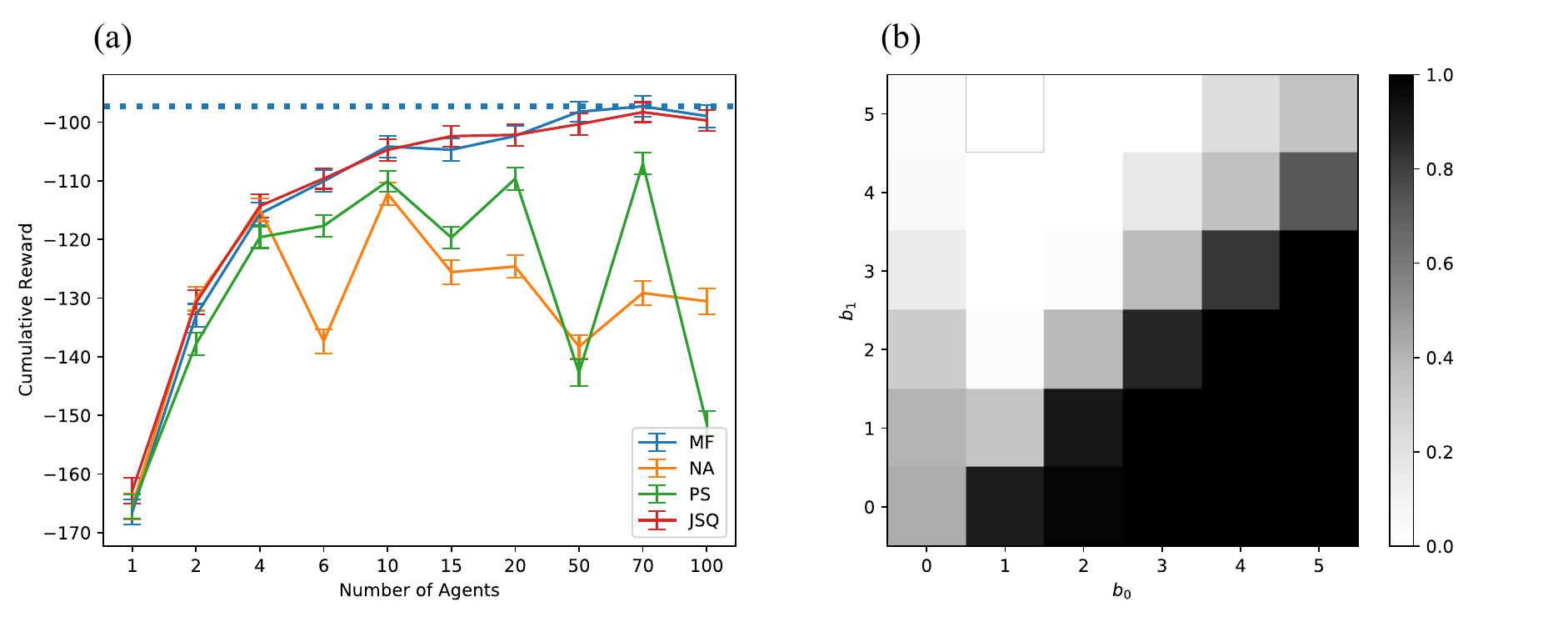}
	\caption{(a): Cumulative reward average over 500 runs with 95\% confidence interval achieved against number of agents $N$. The dotted line indicates cumulative reward of MF in the MFC MDP. NA and PS are trained separately for each $N$, while the MF policy is trained only once and used for all $N$. As $N$ grows, the MF policy performance becomes increasingly close to the MFC MDP and competitive with JSQ, while NA and PS begin to fail learning due to the credit assignment problem. (b): MF policy probabilities of assigning to queue $1$ against buffer fillings $b_0$, $b_1$ for agents with access to both queues, averaged over 500 samples.}
	\label{fig:performance}
\end{figure}

\section{CONCLUSION}
In this work, we have formulated a discrete-time mean field control model with common environment states motivated by a scheduling problem. We have rigorously shown approximate optimality as $N \to \infty$ and applied reinforcement learning to solve the MFC MDP. Empirically, we obtain competitive results for sufficiently many agents and validate our theoretical results. For future work, it could be interesting to consider partial observability of the system for schedulers, or methods to scale to large numbers of queues. Potential extensions are manifold and include dynamic agent states, major-minor systems, partial observability and general non-finite spaces.






\section*{ACKNOWLEDGMENT}
This work has been co-funded by the LOEWE initiative (Hesse, Germany) within the emergenCITY center, the European Research Council (ERC) within the Consolidator Grant CONSYN (grant agreement no. 773196) and the German Research Foundation (DFG) as part of sub-project C3 within the Collaborative Research Center (CRC) 1053 – MAKI.


\bibliographystyle{IEEEtran}
\bibliography{IEEEabrv,references}

\end{document}